\renewcommand{\cal}{\mathcal}
\newcommand\cA{{\mathcal A}}
\newcommand{\cF}{{\cal F}}
\newcommand{\cY}{{\cal Y}}
\newcommand{\cD}{{\cal D}}
\newcommand\cH{{\mathcal H}}
\newcommand{\cK}{{\cal K}}
\newcommand{\cL}{{\cal L}}
\newcommand{\cM}{{\cal M}}
\newcommand{\cN}{{\cal N}}
\newcommand{\cP}{{\cal P}}
\newcommand{\cR}{{\mathcal R}}
\newcommand{\cS}{{\mathcal S}}
\newcommand{\cX}{{\mathcal X}}
\newcommand{\R}{\mathbb{R}}	%
\newcommand{\dist}{\operatorname{dist}}
\newcommand{\error}{\operatorname{err}}
\newcommand{\err}{\widehat{\error}}
\newcommand{\bE}{\mathbb{E}}
\newcommand{\bN}{\mathbb{N}}
\newcommand{\bP}{\mathbb{P}}
\newcommand{\bR}{{\mathbb R}}
\newcommand{\eps}{\epsilon}
\newcommand{\Lbal}{\cL_{\text{bal}}}
\newcommand{\Lcomb}{\cL_{\text{cb}}}
\newcommand{\Lfair}{\cL_{\text{fv}}}
\newcommand{\lfifa}{\ell_{\text{FIFA}}}
\newcommand{\gap}{\operatorname{gap}}
\newcommand{\abs}[1]{\left\lvert #1 \right\rvert}
\newenvironment{proof}{\textit{Proof:}}{\hfill$\square$}
\newtheorem{theorem}{Theorem}[section]
\newtheorem{lemma}{Lemma}[section]
\newtheorem{remark}{Remark}[section]
\newtheorem{example}{Example}[section]
\DeclareMathOperator*{\argmax}{arg\,max}
\def\cJ{\mathcal{J}}
\def\bal{\text{bal}}
\title{FIFA: Making Fairness More Generalizable in Classifiers Trained on Imbalanced Data}
\author[1]{Zhun Deng
\thanks{Corresponding authors: {\tt zhundeng@g.harvard.edu}, {\tt zjiayao@upenn.edu}.}}
\author[2]{Jiayao Zhang }
\author[3]{Linjun Zhang}
\author[4]{Ting Ye}
\author[4,5]{Yates Coley}
\author[2]{Weijie J.~Su}
\author[6]{James Zou}
\affil[1]{Harvard University}
\affil[2]{University of Pennsylvania}
\affil[3]{Rutgers University}
\affil[4]{University of Washington}
\affil[5]{Kaiser Permanente Washington Health Research Institute}
\affil[6]{Stanford University}
\date{}
\begin{document}

\maketitle

\begin{abstract}
Algorithmic fairness plays an important role in machine learning and imposing fairness constraints during learning is a common approach. However, many datasets are imbalanced in certain label classes (e.g. "healthy") and sensitive subgroups (e.g. "older patients"). Empirically, this imbalance leads to a lack of generalizability not only of classification, but also of fairness properties, especially in over-parameterized models. For example, fairness-aware training may 
ensure equalized odds (EO) on the training data, but EO is far from being satisfied on new users. In this paper, we propose a theoretically-principled, yet {\bf F}lexible approach that is {\bf I}mbalance-{\bf F}airness-{\bf A}ware ({\bf FIFA}). Specifically, FIFA encourages both classification and fairness generalization and can be flexibly combined with many existing fair learning methods with logits-based losses. While our main focus is on EO, FIFA can be directly applied to achieve equalized opportunity (EqOpt); and under certain conditions, it can also be applied to other fairness notions. We demonstrate the power of FIFA by combining it with a popular fair classification algorithm, and the resulting algorithm achieves significantly better fairness generalization on several real-world datasets. 

\end{abstract}

\section{Introduction}
\begin{wrapfigure}{r}{0.35\textwidth}
    \vspace{-14pt}
        \centering
        \includegraphics[width=\linewidth]{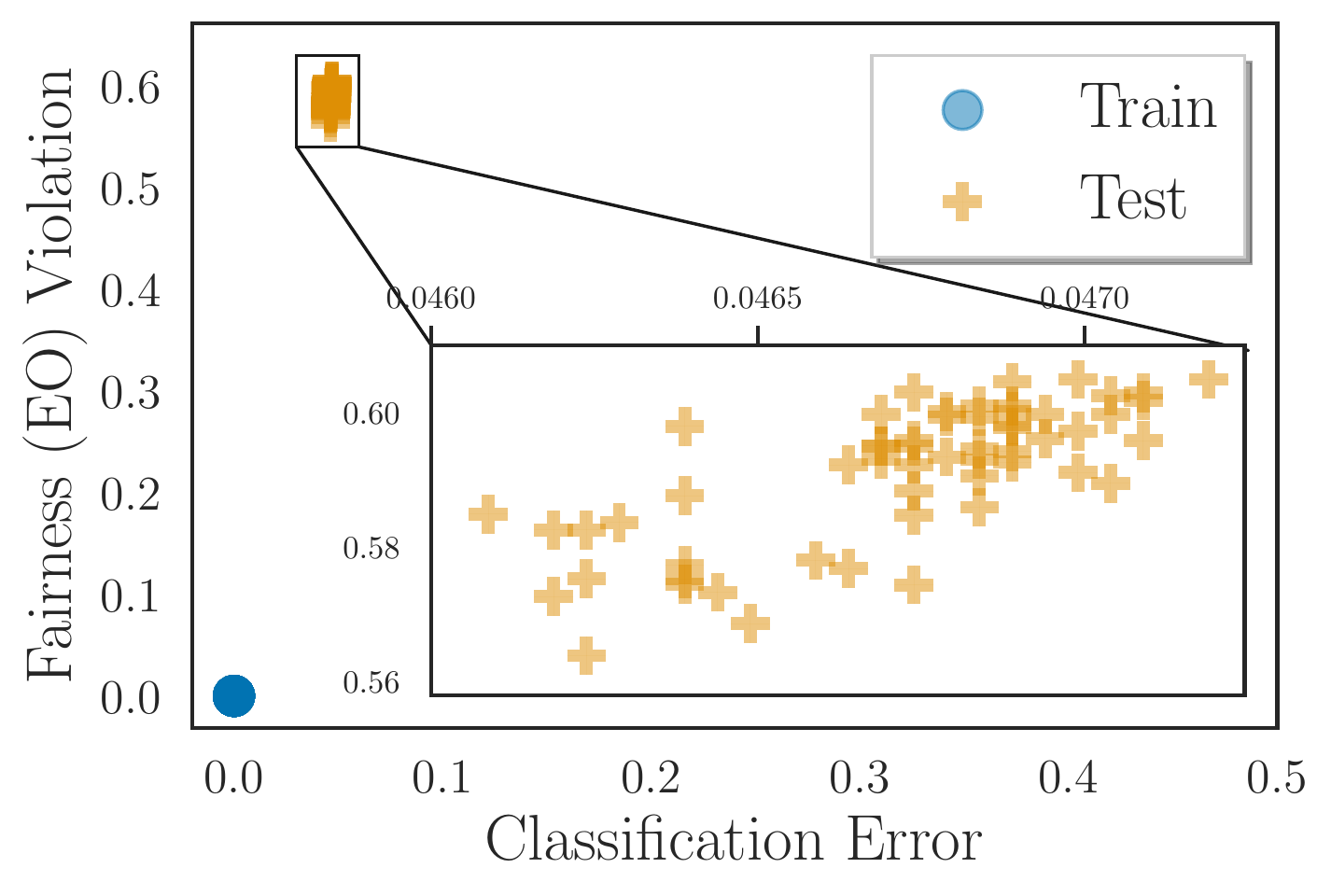}
    	\caption{Each marker corresponds to a sufficiently well-trained ResNet-10 model trained on an imbalanced image classification dataset CelebA (\citep{liu2015faceattributes}).
    	The generalization of fairness constraints (EO)
    	is of magnitudes poorer compared that on
    	classification error.
    	}
    	\label{fig:teaser}
    \vspace{-30pt}
\end{wrapfigure}

Machine learning systems are becoming increasingly vital in our daily lives. The 
growing concern
that they may inadvertently discriminate against minorities and other protected groups when identifying or allocating resources has 
attracted numerous attention from various communities both inside and outside academia.
While significant efforts have been devoted in understanding and correcting biases in classical models such as logistic regressions and supported vector machines (SVM), see, e.g., \citep{agarwal2018reductions,hardt2016equality}, those derived tools are far less effective on modern over-parameterized models such as neural networks (NN).
Over-parameterization
can lead to poor generalization,
as extensive efforts in both theoretical
and empirical studies have exhibited (\citep{zhang2017rethink,nakkiran2020deep,bartlett2020benign}).
Furthermore, in large models, it is also
difficult for \emph{measures of fairness} (such as equalized odds to be introduced shortly) to generalize, as shown in Fig.~\ref{fig:teaser}. Here we find that sufficiently trained ResNet-10 models generalize well on classification error but  poorly on fairness constraints---the gap in equalized odds between the test and training data is more than ten times larger than the gap for classification error between test and training. 

\begin{wrapfigure}{r}{0.35\textwidth}
    \vspace{-15pt}
        \centering
        \includegraphics[width=\linewidth]{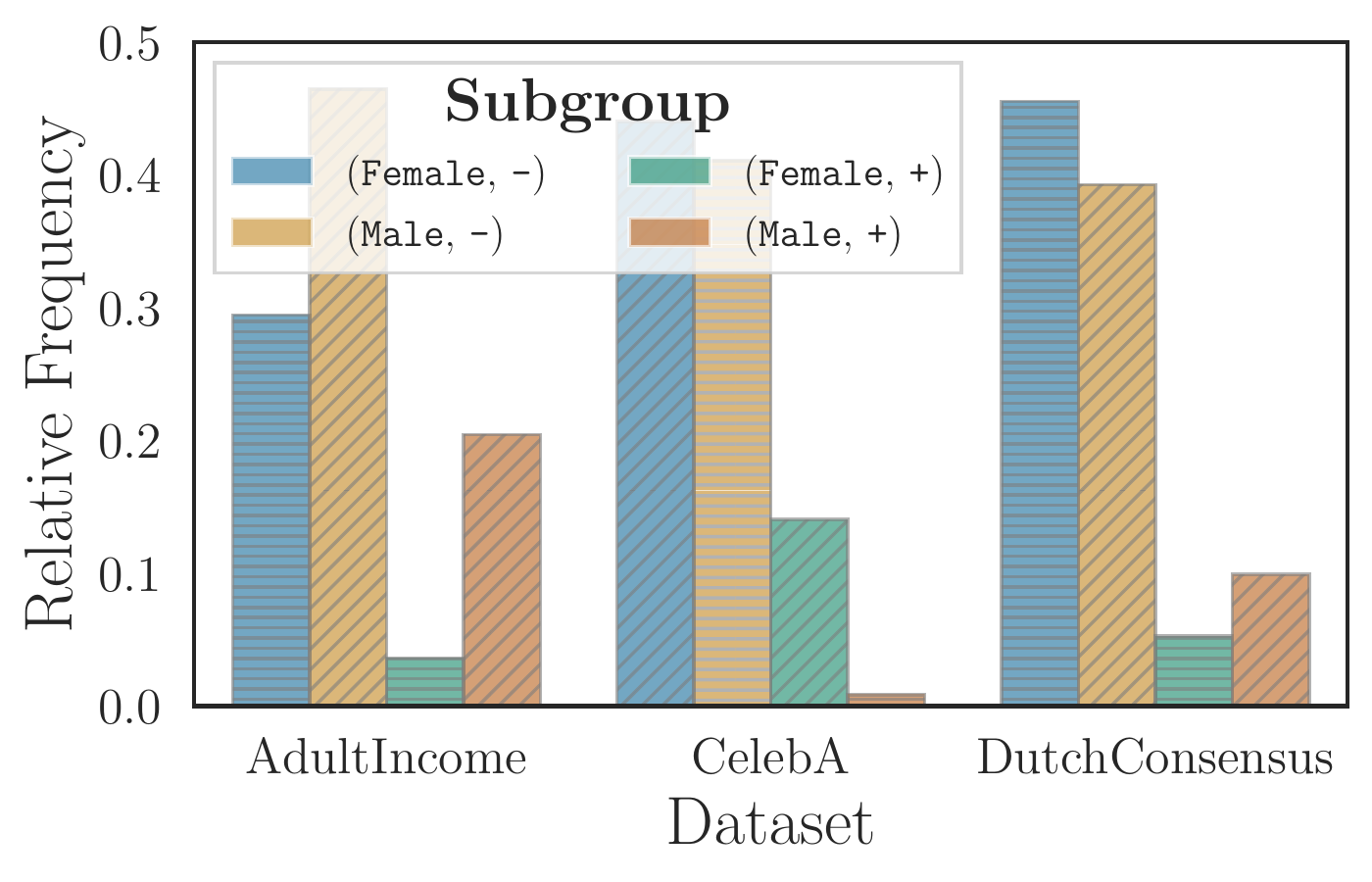}
    	\caption{The subgroups (sensitive attribute, either {\tt Male} or {\tt Female}, and label class, either {\tt +} or {\tt --}) are very
    	imbalanced in many popular datasets
    	across different domains.
    	}
    	\label{fig:teaser:hist}
    \vspace{-10pt}
\end{wrapfigure}%

In parallel, another outstanding challenge
for generalization with real-world datasets is that
they are often \emph{imbalanced} across label and demographic groups (see 
Fig.~\ref{fig:teaser:hist} 
for imbalance in
three commonly used datasets across various domains).
This inherent nature of real-world data,  greatly hinders the generalization of  classifiers that are unaware of this innate imbalance,
especially when the performance measure 
places substantial emphasis on minority classes or subgroups without sufficient samples (e.g., when considering the average classification error for each label class).
Although generalizations with imbalanced data has been extensively studied and
mitigation strategies are proposed
\citep{cao2019learning,mani2003knn,he2009learning,an2020resampling,he2013imbalanced,krawczyk2016learning}, it's unclear
how well fairness properties generalize. It is also an open challenge how to improve the generalization of fairness in over-parameterized models.

\paragraph{Our contributions.} In this paper, we initiate the study for this open challenge of fairness generalization for supervised classification tasks in imbalanced datasets. Inspired by recent works on regularizing the minority classes more strongly than the frequent classes by imposing class-dependent margins \citep{cao2019learning} in standard supervised learning, we design a theoretically-principled, {\bf F}lexible and {\bf I}mbalance-{\bf F}airness-{\bf A}ware (FIFA) approach that takes both classification error and fairness constraints violation into account when training the model. Our proposed method FIFA 
can be flexibly combined with many fair learning methods with logits-based losses
such as the soft margin loss \citep{liu2016large}
by encouraging larger margins for minority subgroups. 
While our method appears to be motivated for over-parameterized models such as neural networks, it nonetheless also helps simpler models such as logistic regressions. Experiments on both large datasets using over-parameterized models  as well as smaller
datasets using simpler models demonstrate the effectiveness,
and flexibility of our approach in ensuring a better fairness generalization while preserving good classification generalization.

\paragraph{Related work.}
Over-parameterized models such as neural network has achieved great performance in modern learning, however, the nature of optimization \citep{deng2020representation,deng2021adversarial,ji2021unconstrained,ji2021power,kawaguchi2022understanding,zhang2021and}, robustness \citep{deng2020interpreting,deng2021improving,deng2020towards}, and generalization \citep{deng2020towards,deng2021shrinking,zhang2020does,yao2021improving,yao2021meta} of neural networks remains mysterious. Even worse, the inherent imbalance in certain label classes bring great trouble for neural networks to generalize in supervised learning, which has attracted significant interest in the machine learning communities. Several methods including
resampling, reweighting, and data augmentation have been developed and deployed in practice \citep{mani2003knn,he2009learning,an2020resampling,he2013imbalanced,krawczyk2016learning,chang2017active,haixiang2017learning,sagawa2020investigation,wang2017learning,cui2019class,byrd2019effect}. Theoretical
analyses of those methods include margin-based approaches \citep{li2002perceptron,kakade2008complexity,khan2019striking,cao2019learning}.
Somewhat tangentially, an outstanding and emerging problem faced by modern models with real-world data is \emph{algorithmic fairness}
\citep{dwork2012fairness,barocas2016big,corbett2018measure,selbst2019fairness,madaio2020co,mehrabi2021survey,coley2021racial,burhanpurkar2021scaffolding}, where
practical algorithms are developed for pre-processing \citep{feldman2015certifying},
in-processing \citep{zemel2013learning,edwards2015censoring,zafar2017fairness,donini2018empirical,madras2018learning,martinez2020minimax,lahoti2020fairness},
and
post-processing \citep{hardt2016equality,kim2019multiaccuracy,cherepanova2021technical} steps. Nonetheless, there are several challenges when applying fairness algorithms
in practice \citep{beutel2019putting,saha2020measuring,holstein2019improving,madaio2020co}. Specifically, as hinted in Fig.~\ref{fig:teaser}, the fairness generalization guarantee, especially
in over-parameterized models and large datasets, is not well-understood, leading to various practical concerns. We remark that although \citep{kini2021label} claims it is necessary to use multiplicative instead of additive logits adjustments, their motivating example is different from ours and they studied SVM with \emph{fixed and specified budgets} for all inputs. In another closely related work, \citep{cotter2019training}, classical learning theory is being studied without addressing issues raised by imbalanced data.
To the best of our knowledge,
this is the first work tackling the open challenge of fairness generalization with imbalanced data.

\vspace{-0.2cm}
\section{Background}

\paragraph{Notation.} For any $k\in \bN^+$, we use $[k]$ to denote the set $\{1,2,\cdots,k\}$. For a vector $v$, let $v_i$ be the $i$-th coordinate of $v$. We use $\bm{1}$ to denote the indicator function. For a set $S$, we use $|S|$ to denote the cardinality of $S$. For two positive sequences $\{a_k\}$ and $\{b_k\}$, we write $a_k =O(b_k)$ (or $a_n\lesssim b_n$), and $a_k = o(b_k)$, if $\lim_{k\rightarrow\infty}(a_k/b_k) < \infty$ and $\lim_{k\rightarrow\infty}(a_k/b_k) =0$, respectively. We use $\bP$ for probability and $\bE$ for expectation, and we use $\hat \bP$ and $\hat \bE$ for empirical probability and expectation. For the two distributions $\cD_1$ and $\cD_2$, we use $p \cD_1+(1-p)\cD_2$ for $p\in(0,1)$ to denote the mixture distribution such that a sample is drawn with probabilities $p$ and $(1-p)$ from $\cD_1$ and $\cD_2$ respectively. We use $\cN_p(\mu,\Sigma)$ to denote $p$-dimensional Gaussian distribution with mean $\mu$ and variance $\Sigma$.


\paragraph{Fairness notions.} Throughout the paper, we consider datasets consisting of triplets of the form $(x, y, a)$, where $x\in\cX$ is a feature vector, $a\in \cA$ is a sensitive attribute such as race and gender, and $y\in\cY$ is the corresponding label. The underlying random triplets corresponding to $(x,y,a)$ is denoted as $(X,Y,A)$. Our goal is to learn a predictor $h\in\cH:\cX\mapsto \cY$, where $h(X)$ is a prediction of the label $Y$ of input $X$. In this paper, we mainly consider \textit{equalized odds} (EO) \citep{hardt2016equality} that has been widely used in previous literature on fairness. But our method could also be directly used to \textit{equalized opportunity} (EqOpt) given that EqOpt is quite similar to EO. In addition, under certain conditions, our method could also be used to demographic parity (DP), which we will mainly discuss in the Appendix. Specifically, the notions mentioned above are defined as follows.
\begin{itemize}
	\item[(i).] \textit{Equalized odds (EO) and Equalized opportunity (EqOpt)}.   A predictor $h$ satisfies equalized odds if $h(X)$ is conditionally independent of the sensitive attribute $A$ given $Y$: $\bP(h(X)=y|Y=y,A=a)=\bP(h(X)=y|Y=y).$ If $\cY=\{0,1\}$ and we only require $\bP(h(X)=1|Y=1,A=a)=\bP(h(X)=1|Y=1),$ we say $h$ satisfies equalized opportunity.
	\item[(ii).] \textit{Demographic parity (DP)}. A predictor $h$ satisfies demographic parity if $h(X)$ is statistically independent of the sensitive attribute $A$: $\bP(h(X)=Y|A=a)=\bP(h(X)=Y).$
\end{itemize}

\section{Theory-inspired derivation}\label{sec:motivation}
While we will formally introduce our new approach in Section \ref{sec:flexible alg}, this section gives an informal derivation, with an emphasis on insights. We design an 
imbalance-fairness-aware approach that can be flexibly combined with fair learning methods with logits-based losses.

Consider the supervised $k$-class classification problem, where a model $f:\cX\mapsto \bR^k$ provides $k$ scores, and the label is assigned as the class label with the highest score. The corresponding predictor $h(x)=\argmax_i f(x)_i$ if there are no ties. Let us use $\cP_i=\cP(X|Y=i)$ to denote the conditional distribution when the class label is $i$ for $i\in[k]$ and $\cP_{\bal}$ to denote the balanced distribution $\cP_{\text{Idx}}$, where $\text{Idx}$ is uniformly drawn from $[k]$. Similarly, let us use $\cP_{i,s}=\cP(X|Y=i,A=s)$ to denote the conditional distribution when $Y=i$ and $A=s$. The corresponding empirical distributions induced by the training data are $\hat\cP_i$, $\hat\cP_{\bal}$ and $\hat \cP_{i,s}$. For the training dataset $\{(x_j,y_j,a_j)\}_j$, let $S_i=\{j:y_j=i\}$, $S_{i,a}=\{j:y_j=i,a_j=a\}$, and the corresponding sample sizes be $n_i$ and $n_{i,a}$, respectively. Although $\cP_i$, $\cP_{\bal}$ and $ \cP_{i,s}$ are all distributions on $\cX$, we sometimes use notations like $(x,y)\sim \cP_i$ to denote the distribution of $(x,i)$, where $x\sim \cP_i$ . Our goal is to ensure $\cL_{\bal}[f]=\bP_{(x,y)\sim\cP_{\bal}}[f(x)_y<\max_{l\neq y}f(x)_l]$ and the fairness violation error to be as small as possible. 
In order to do that, we need to take the margin of subgroups divided according to sensitive attributes in each label class (so called demographic subgroups in different classes) into account.

\paragraph{Margin trade-off between classes of equalized odds.}
In the setting of standard classification with imbalanced training datasets such as in \citep{cao2019learning,sagawa2020investigation}, the aim is to reach a small balanced test error $\Lbal[f]$. However, in a fair classification setting, our aim is not only to reach a small $\Lbal[f]$, but also to satisfy certain fairness constraints at \textit{test time}.
  Specifically, for EO, the aim is:
 \begin{align*}
  &\min_f \Lbal[f]\\
  \text{s.t.}~\forall y\in\cY, a\in \cA,~~\bP(h(X)&=y|Y=y,A=a)=\bP(h(X)=y|Y=y),
\end{align*}
where we recall that $h(\cdot)=\argmax_i f(\cdot)_i$. We remark here that in addition to the class-balanced loss $\Lbal[f]$, we can also consider the  loss function that is balanced across all demographic subgroups in different classes, the derivation is similar and we omit it here.

Recall our motivating example in Figure \ref{fig:teaser}. Whether the fairness violation error is small \textit{at test time} should also be taken into account. Thus, our \textit{\textbf{performance criterion}} for optimization should be: 
\begin{equation}\label{eq:cri}
 \Lbal[f]+\alpha \Lfair,   
\end{equation}
where $\Lfair$ is a measure of fairness constraints violation that we will specify later, and $\alpha$ is a weight parameter chosen according to how much we care about the fairness constraints violation. 

For simplicity, we start with $\cY=\{0,1\}$ and $\cA=\{a_1,a_2\}$. In the Appendix, we will further discuss the case when  there are \textit{multiple classes and multiple demographic groups}. We also assume the training data is well-separated that all the training samples are perfectly classified and fairness constraints are perfectly satisfied. The setting has been considered in \citep{cao2019learning} and can be satisfied if the model class is rich, for instance, for over-parameterized models such as neural networks. Specifically, if all the training samples are classified perfectly by $h$, not only $\bP_{(x,y)\sim\hat \cP_{\bal}}(h(x)\neq y)=0$ is satisfied, we also have that $\bP_{(x,y)\sim\hat{\cP}_{i,a_j}}(h(x)\neq y)=0$ for all $i\in\cY$ and $a_j\in\cA$. We remark here that $\bP(h(X)=i|Y=i,A=a)=1-\bP_{(x,y)\sim\cP_{i,a}}(h(x)\neq y)$. Denote the margin for class $i$ by $\gamma_i=\min_{j\in S_i} \gamma(x_j,y_j)$, where $\gamma(x,y)=f(x)_y-\max_{l\neq y}f(x)_l$. One natural way to choose $\Lfair$ is to take $\sum_{i\in\cY}|\bP(h(X)=i|Y=i,A=a_1)-\bP(h(X)=i|Y=i,A=a_2)|.$ Then, our performance criterion for optimization in (\ref{eq:cri}) is: 
\begin{equation*}
 \cM[f]=\Lbal[f]+\alpha \sum_{i\in\cY}\left|\bP(h(X)=i|Y=i,A=a_1)-\bP(h(X)=i|Y=i,A=a_2)\right|.
 \end{equation*}
By a similar margin-based generalization arguments in \citep{cao2019learning,kakade2008complexity}, we proved the following theorem.

\begin{theorem}[Informal]\label{thm:eomargin}
With high probability over the randomness of the training data, for $\cY=\{0,1\}$, $\cA=\{a_1,a_2\}$, and for some proper complexity measure of class $\cF$, for any $f\in\cF$,
\begin{align}\label{eq:M}
\begin{split}
\cM[f]\lesssim\sum_{i\in\cY}\frac{1}{\gamma_i}\sqrt{\frac{C(\cF)}{n_i}}+ \sum_{i\in\cY,a\in\cA}\frac{\alpha}{\gamma_{i,a}}\sqrt{\frac{C(\cF)}{n_{i,a}}}
\le \sum_{i\in\cY}\frac{1}{\gamma_i}\sqrt{\frac{C(\cF)}{n_i}}+ \sum_{i\in\cY,a\in\cA}\frac{\alpha}{\gamma_{i}}\sqrt{\frac{C(\cF)}{n_{i,a}}},
\end{split}
\end{align}
where $\gamma_i$ is the margin of the $i$-th class's sample set $S_i$ and $\gamma_{i,a}$ is the margin of demographic subgroup's sample set $S_{i,a}$.
\end{theorem}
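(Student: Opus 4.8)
The plan is to turn $\cM[f]$ into a sum of class-conditional and demographic-subgroup-conditional zero-one risks, each measured against the margin function $\gamma(x,y)=f(x)_y-\max_{l\neq y}f(x)_l$, and then to control every such risk with a margin-based uniform-convergence bound of the type used in \citep{cao2019learning,kakade2008complexity}. First I would rewrite the two pieces of $\cM[f]$ through $\gamma$: modulo ties (which are excluded) $h(x)\ne y\iff\gamma(x,y)\le0$, and since $\cP_{\bal}=\tfrac1k\sum_{i\in\cY}\cP_i$ this gives $\Lbal[f]=\tfrac1k\sum_{i\in\cY}\bP_{x\sim\cP_i}[\gamma(x,i)\le0]$. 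For the fairness piece I use the identity already noted in the text, $\bP(h(X)=i\mid Y=i,A=a)=1-\bP_{x\sim\cP_{i,a}}[\gamma(x,i)\le0]$, so that for each $i\in\cY$,
\[
\bigl|\bP(h(X)=i\mid Y=i,A=a_1)-\bP(h(X)=i\mid Y=i,A=a_2)\bigr|
=\bigl|\bP_{\cP_{i,a_2}}[\gamma(x,i)\le0]-\bP_{\cP_{i,a_1}}[\gamma(x,i)\le0]\bigr|
\le\sum_{a\in\cA}\bP_{\cP_{i,a}}[\gamma(x,i)\le0].
\]
Hence $\cM[f]\lesssim\sum_{i\in\cY}\bP_{\cP_i}[\gamma(x,i)\le0]+\alpha\sum_{i\in\cY,a\in\cA}\bP_{\cP_{i,a}}[\gamma(x,i)\le0]$ (the $1/k$ being absorbed into $\lesssim$), so it suffices to bound each population margin-zero risk.

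Second, for a fixed class $i$ I would invoke the standard margin generalization bound: with probability at least $1-\delta$, simultaneously over all $f\in\cF$ and all thresholds $t>0$,
\[
\bP_{x\sim\cP_i}[\gamma(x,i)\le0]\;\le\;\hat\bP_{x\sim\hat\cP_i}[\gamma(x,i)\le t]\;+\;\frac{c}{t}\sqrt{\frac{C(\cF)}{n_i}}\;+\;O\!\Bigl(\sqrt{\tfrac{\log(1/\delta)}{n_i}}\Bigr),
\]
where the $1/t$ factor comes from the $(1/t)$-Lipschitz ramp surrogate of $\mathbf{1}\{\cdot\le0\}$, $C(\cF)$ is the relevant complexity functional (a bound on the worst-case Rademacher complexity or a covering-number exponent of $\cF$), and the ``for all $t$'' version follows by a union bound over a geometric grid of thresholds at the cost of a $\log$ factor absorbed into $C(\cF)$. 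Under the well-separation assumption every point of $S_i$ is classified with margin at least $\gamma_i=\min_{j\in S_i}\gamma(x_j,i)>0$, so taking $t=\gamma_i$ makes the empirical term vanish, giving $\bP_{\cP_i}[\gamma(x,i)\le0]\lesssim\tfrac1{\gamma_i}\sqrt{C(\cF)/n_i}$. Running the same argument on $\hat\cP_{i,a}$ with $t=\gamma_{i,a}=\min_{j\in S_{i,a}}\gamma(x_j,i)>0$ yields $\bP_{\cP_{i,a}}[\gamma(x,i)\le0]\lesssim\tfrac1{\gamma_{i,a}}\sqrt{C(\cF)/n_{i,a}}$. A union bound over the $|\cY|+|\cY|\,|\cA|$ events and recombination with the weight $\alpha$ then produces the first inequality of \eqref{eq:M}; the second is immediate since $S_{i,a}\subseteq S_i$ forces $\gamma_{i,a}\ge\gamma_i$, hence $1/\gamma_{i,a}\le1/\gamma_i$.

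The main obstacle is the uniform-convergence step with a data- and $f$-dependent margin threshold: because $\gamma_i$ and $\gamma_{i,a}$ are chosen after seeing $f$ and the sample, one cannot apply a margin bound at a single prescribed scale. The standard remedy --- peeling/discretization over a grid of candidate margins, combined with a covering-number (or Rademacher-contraction) bound for the induced surrogate-loss class exactly as in \citep{kakade2008complexity,cao2019learning} --- resolves this, but one must state $C(\cF)$ so that the same functional simultaneously controls all the restricted distributions $\cP_i$ and $\cP_{i,a}$, and must check that the $\sqrt{\log(1/\delta)/n_{i,a}}$ remainders are dominated by (or can be folded into) the displayed $\tfrac1{\gamma}\sqrt{C(\cF)/n}$ terms. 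Everything else --- the reduction of $\cM[f]$ to margin risks and the triangle-inequality bound on the fairness gap --- is routine.
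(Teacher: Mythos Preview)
Your proposal is correct and follows essentially the same route as the paper: decompose $\cM[f]$ into class- and subgroup-conditional zero-one risks via the triangle inequality on the fairness term, apply the standard margin-based generalization bound from \citep{kakade2008complexity} to each piece (which the paper packages as its Lemma~A.1), use the well-separation assumption to kill the empirical margin terms, and then invoke $\gamma_{i,a}\ge\gamma_i$ for the second inequality. If anything, your treatment of the data-dependent margin threshold (via peeling over a geometric grid) is more explicit than the paper's, which simply cites \citep{kakade2008complexity} and absorbs the low-order $\epsilon_*(n_*,\delta,\gamma_*)$ remainder into the $\lesssim$.
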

Optimizing the upper bound in (\ref{eq:M}) with respect to margins in the sense that 
$g(\gamma_0,\gamma_1)\le g(\gamma_0-\delta,\gamma_1+\delta)$ 
for $g(\gamma_0,\gamma_1)=\sum_{i\in\cY}\frac{1}{\gamma_i \sqrt{n_i}}+\alpha \sum_{i\in\cY,a\in\cA}\frac{1}{\gamma_{i} \sqrt{n_{i,a}}}$ and all $\delta\in[-\gamma_1,\gamma_0]$,
we obtain
$$\gamma_0/\gamma_1=\tilde{n}_0^{-1/4}/\tilde n_1^{-1/4},$$
where the adjusted sample size $$\tilde n_i=\frac{n_i\Pi_{a\in\cA}n_{i,a}}{(\sqrt{\Pi_{a\in\cA}n_{i,a}}+\alpha\sum_{a\in\cA}\sqrt{n_in_{i,a}})^2}$$ for $i\in \{0,1\}$. From Theorem~\ref{thm:eomargin}, we see how sample sizes of each subgroups are taken into account and how they affect the optimal ratio between class margins. Based on this theorem, we will propose our theoretical framework in Section \ref{sec:flexible alg}. A closely related derivation has been used in \citep{cao2019learning}, but their focus is only on the classification error and its generalization. As we will show in Example \ref{ex:example}, when fairness constraints are also considered, their methods could sometimes perform poorly with respect to the generalization of those constraints. We remark here that if we do not consider the fairness constraints violation, then $\alpha=0$, and the effective sample sizes degenerate to $\tilde n_i=n_i$.

For illustration, we demonstrate the advantage of applying our approach to select margins over directly using the margin selection in \citep{cao2019learning} by considering Gaussian models, which is widely used in machine learning theory \citep{schmidt2018adversarially,zhang2021and,deng2021improving}. Specifically, our training data follow distribution: $x\mid y=0 \sim \sum_{i=1}^{2}\pi_{0,a_i} \cN_p(\mu_i, I),~~x\mid y=1 \sim \sum_{i=1}^{2}\pi_{1,a_i} \cN_p(\mu_i+\beta^*, I).$ 
Here, in class $j$, subgroup $a_i$ is drawn with probability $\pi_{j,a_i}$, then, given the sample is from subgroup $a_i$ in class $j$, the data is distributed as a Gaussian random vector. Recall the corresponding training dataset indices of subgroup $a_i$ in class $j$ is denoted as $S_{j,a_i}$, and $|S_{j,a_i}|=n_{j,a_i}$. Consider the case $\alpha=1$ 
, $\pi_{0,a_1}=\pi_{0,a_2}$, and the following class of classifiers: $\mathcal F=\Big\{\bm{1}\{\beta^{*\top} x>c\}:c\in \R \Big\},$ which is a linear classifier class that contains classifiers differ from each other by a translation in a particular direction. 

\begin{example}\label{ex:example}
Given function $f$ and set $S$, let $\dist(f,S)=\min_{x,s\in S}\|f(x)-s\|_2.$ Consider two classifiers $\tilde{f}, f \in \cF$ such that $$\dist(\tilde f, S_0)/\dist(\tilde f, S_1)=\tilde{n}_0^{-1/4}/\tilde n_1^{-1/4}$$ and $\dist(f', S_0)/\dist(f', S_1)=n_0^{-1/4}/ n_1^{-1/4}$. Suppose  $\|\beta^*\|\gg \sqrt{p\log n}$, $\|\mu_i\|<C$, $(\mu_1^*-\mu_2^*)^\top\beta=0$,  
and $\pi_{1,a_2}\le c_1 \pi_{1,a_1}$ for a sufficiently small $c_1>0$, then when $n_0, n_1$ are sufficiently large, with high probability we have $\cM[\tilde f] < \cM[f].$
\end{example}

 \begin{remark}
(1). We provide analyses for the $0$-$1$ loss as our ultimate goal is to strike a balance between good \textbf{test accuracy} and small \textbf{fairness constraints violation}.
If we use surrogates such as the softmax-cross-entropy loss for the $0$-$1$ loss in training, our theoretical analyses still stand
since we always adjust margins based
on the $0$-$1$ loss
as our interests are in quantities such as test accuracy. 
(2). Our analysis can be readily applied to EqOpt and DP constraints under certain conditions.
We provide analyses and experiments for DP in the Appendix.
\end{remark}

\section{Flexible combination with logits-based losses }\label{sec:flexible alg}
\begin{wrapfigure}{r}{0.25\textwidth}
    \vspace{-5pt}
\vspace{-8pt}
        \centering
        \includegraphics[width=\linewidth]{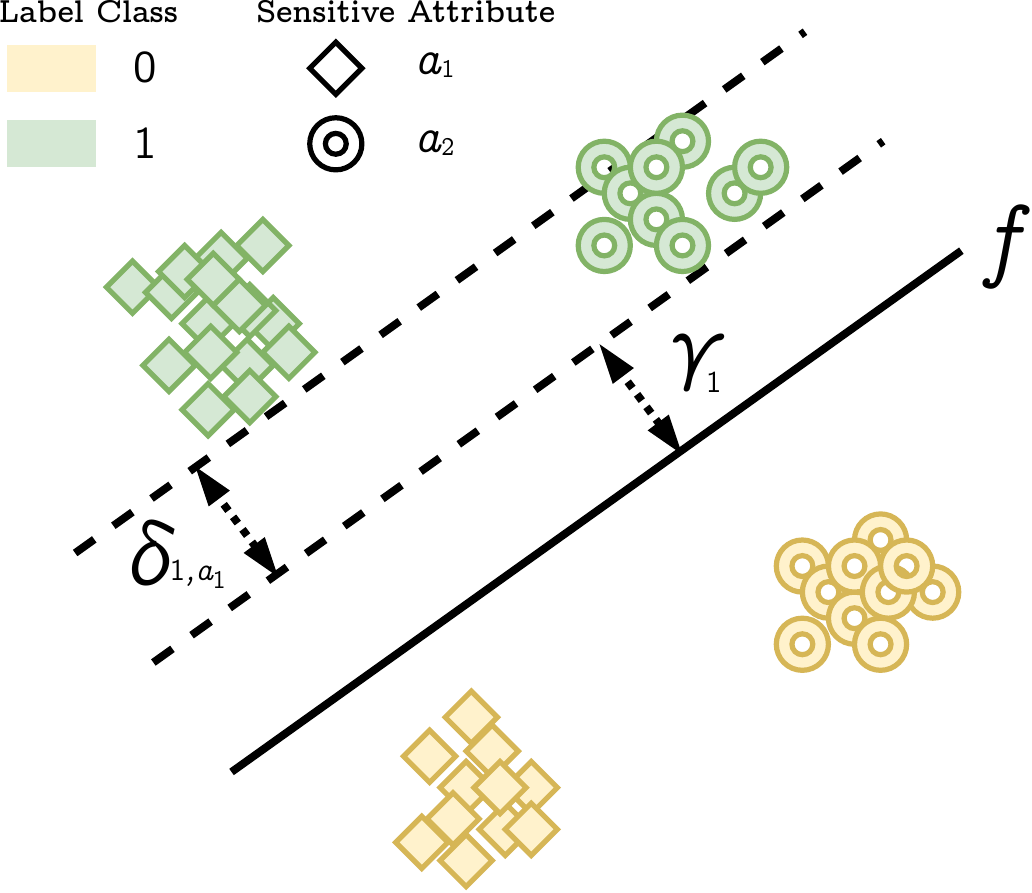}
    	\caption{Illustration of $\delta_{i,a}$
    	and the margin $\gamma$ of classifier:
    	$\delta_{1,a_1}$ is set to be non-negative and
    	$\delta_{1,a_2}$ is set to be
    	zero as the subgroup $(1,a_2)$ is closer to the decision boundary than $(1,a_1)$.
    	}
    	\label{fig:teaser:margin}
    \vspace{-15pt}
\end{wrapfigure}
For ease of exposition, we focus solely on
the EO constraint hereafter and discuss other constraints in the Appendix.
Inspired by the margin trade-off characterized in Section~\ref{sec:motivation}, we propose our FIFA approach for {\bf F}lexible {\bf I}mbalance-{\bf F}airness-{\bf A}ware classification that can be easily combined with different types of logits-based losses, and 
further incorporated into any existing fairness algorithms such as those discussed in Section~\ref{sec:algo}.

Let us recall that in Theorem~\ref{thm:eomargin},
$\gamma_{i,a}=\gamma_{i}+\delta_{i,a}$ and $\delta_{i,a}\ge 0$ (since $\gamma_i=\min\{\gamma_{i,a_1},\gamma_{i,a_2}\}$, also see Fig.~\ref{fig:teaser:margin} for illustration), hence the middle term of Eq.~(\ref{eq:margin})
can be
further upper bounded by the last term in Eq.~(\ref{eq:M}). The final upper bound in Eq.~(\ref{eq:M}) is indeed sufficient for obtaining the margin trade-off between classes. Nonetheless, if we want to further enforce margins for each demographic subgroup in each class, we need to use the refined bound.
Specifically, in Section~\ref{sec:motivation}, we have identified a way to select $\gamma_0/\gamma_1$, based on which we propose to enforce margins for each demographic subgroup's training set $S_{i,a}$ of the form
\begin{equation}\label{eq:margin}
\gamma_{i,a}=\frac{C}{\tilde{n}_i^{1/4}}+\delta_{i,a},
\end{equation}
where $\delta_{i,a}$ and $C$ are all non-negative tuning parameters.
In light of the trade-off between the class margins $\gamma_0/\gamma_1=\tilde{n}_0^{-1/4}/\tilde n_1^{-1/4}$,
we can set $\gamma_i$ of the form $C/\tilde{n}_i^{-1/4}$. 
Given $\gamma_{i,a}\ge \gamma_i$, a natural
choice for margins for subgroups
is Eq.~(\ref{eq:margin}).

\paragraph{How to select $\delta_{i,a}$?}
Knowing the form of margins from the preceding discussions, an outstanding question remains: how to select $\delta_{i,a}$ for imbalanced datasets?
Let $\cY=\{0,1\}$ and $\cA=\{a_1,a_2\}$, within each class $i$, we identify $S_{i,a}$ with the largest cardinality $|S_{i,a}|$ and set the corresponding $\delta_{i,a}=0$.
The remaining $\delta_{i,\cA\backslash a}$ are tuned as a non-negative  parameter. As a further illustration, without loss of generality, assume for all $i$, $|S_{i,a_1}|\ge|S_{i,a_2}|$.
Thus selected $\{\delta_{i,a}\}_{i,a}$ ensures the upper bound in the middle of Eq.~(\ref{eq:M}) is tighter in the sense that for any $\delta>0$,
\begin{equation*}
\sum_{i\in\cY}\Big(\frac{1}{\gamma_{i} \sqrt{n_{i,a_1}}}+\frac{1}{(\gamma_{i}+\delta) \sqrt{n_{i,a_2}}}\Big)\le  \sum_{i\in\cY}\Big(\frac{1}{(\gamma_{i}+\delta) \sqrt{n_{i,a_1}}}+\frac{1}{\gamma_{i} \sqrt{n_{i,a_2}}}\Big).
\end{equation*}

In the Appendix, we will present how to choose $\delta_{i,a}$'s when there are multiple demographic groups.
Briefly speaking, our results similar to the above inequality are proved
by an application of the rearrangement inequality. Simple as it is, the
high-level view is meaningful -- 
the decision boundaries of a fair predictor should be farther away from the less-frequent subgroup than the more-frequent subgroup to ensure better fairness generalization.

\paragraph{Flexible imbalance-fairness-aware (FIFA) approach.} We will demonstrate how to apply the above motivations to design better margin losses. Loosely speaking, we consider a logits-based loss
\begin{equation*}
\ell((x,y);f)=\ell(f(x)_y, \{f(x)_i\}_{i\in\cY\backslash y}),
\end{equation*}
which is non-increasing with respect to its first coordinate if we fix the second coordinate. Such losses include (i). $0$-$1$ loss: $\bm{1}\{f(x)_y<\max_{i\in\cY\backslash y}f(x)_{i}\}$. (ii). Hinge loss: $\max\{\max_{i\in\cY\backslash y}f(x)_{i}-f(x)_y,0\}$. (iii). Softmax-cross-entropy loss: $-\log e^{f(x)_y}/(e^{f(x)_y}+\sum_{i\neq y}e^{f(x)_i})$.

Our flexible imbalance-fairness-aware (FIFA) approach modifies the above losses by enforcing margin of the form in Eq.~(\ref{eq:margin}). Specifically, we use the following loss function \textbf{\textit{during training}}
\begin{equation} \label{eq:fifaloss}
\lfifa((x,y,a);f)=\ell(f(x)_y-\Delta_{y,a}, \{f(x)_i\}_{i\in\cY\backslash y})
\end{equation}
where $\Delta_{i,a}=C/\tilde{n}_i^{1/4}+\delta_{i,a}$. We remark here $\ell_{\text{FIFA}}((x,y,a);f)$ is used only during training phase, where we allow access to sensitive attribute $a$. In the test time, we only need to use $f$ but \textit{\textbf{not}} $a$.

\section{Example: combining FIFA with reductions-based fair algorithms}\label{sec:algo}

In this section, we demonstrate the power of our approach by combining it with a popular reduction-based fair classification algorithm \citep{agarwal2018reductions} \textit{as an example}. In Section~\ref{sec:exp}, we show that incorporating our approach can bring a significant gain in terms of both combined loss and fairness generalization comparing with directly applying their method in vanilla models trained with softmax-cross-entropy losses. 
The reduction approach proposed in \citep{agarwal2018reductions} has two versions: (i).~{\bf Exponentiated gradient} ({\bf ExpGrad}) that produces a randomized classifier; and (ii).~{\bf Grid search} ({\bf GridS}) that produces a deterministic classifier. Our approach can be combined with both.

\paragraph{Exponentiated gradient (ExpGrad).}
We first briefly describe the algorithm here. For $\cY=\{0,1\}$, by \citep{agarwal2018reductions}, EO constraints could be rewritten as $M\mu(h)\le c$ for certain $M$ and $c$, where $\mu_j(h)=\bE[h(X)|E_j]$ for $j\in\mathcal{J}$, $M\in\bR^{|\cK|\times |\cJ|}$, and $c\in\bR^{\cK}$. Here, $\cK=\cA\times\cY\times\{+,-\}$ ($+,-$ impose positive/negative sign so as to recover $|\cdot|$ in constraints) and $\cJ=(\cA\cup \{*\})\times\{0,1\}$. $E_{(a,y)}= \{A=a,Y=y\}$ and $E_{(*,y)}=\{Y=y\}$. Let $\error(h)=\bP(h(X)\neq Y)$, instead of considering $\min_{h\in\cH} \error(h)$ such that $M\mu(h)\le c$,
{\bf ExpGrad} obtains the best \emph{randomized classifier}, by sampling a classifier $h\in\cH$ from a distribution over $\cH$.
Formally, this optimization can be formulated as $$\min_{Q\in\Delta_{\cH}} \error(Q)~~\text{such that}~~ M\mu(Q)\le c,$$ where $$\error(Q)=\sum_{h\in\cH}Q(h)\error(h),$$ and $\mu(Q)=\sum_{h\in\cH}Q(h)\mu(h)$, $Q$ is a distribution over $\cH$, and $\Delta_\cH$ is the collection of distributions on $\cH$. Let us further use $\err(Q)$ and $\hat{\mu}(Q)$ to denote the empirical versions and also allows relaxation on constraints by using $\hat c=c+\epsilon$, where $\hat c_k=c_k+\epsilon_k$ for relaxation $\varepsilon_k\ge 0$. By classic optimization theory, it could be transferred to a saddle point problem, and \citep{agarwal2018reductions} aims to solve the following prime dual problems simultaneously for $L(Q,\lambda)=\err(Q)+\lambda^\top(M\hat\mu(Q)-\hat c)$:
\begin{align*}
(\textbf{P}):~\min_{Q\in\Delta} \max_{\lambda\in\bR^{|\cK|}_+,\|\lambda\|_1\le B} L(Q,\lambda),\qquad (\textbf{D}):~\max_{\lambda\in\bR^{|\cK|}_+,\|\lambda\|_1\le B}\min_{Q\in\Delta} L(Q,\lambda).
\end{align*}
To summarize, $\textbf{ExpGrad}$ takes training data $\{(x_i,y_i,a_i)\}_{i=1}^n$, function class $\cH$, constraint parameters $M,\hat c$, bound $B$, accuracy tolerance $v>0$, and learning rate $\eta$ as inputs and outputs $(\hat Q,\hat\lambda)$, such that $L(\hat Q,\hat\lambda)\le L(Q,\hat\lambda)+\nu$ for all $Q\in \Delta_\cH$ and $L(\hat Q,\hat\lambda)\le L(\hat Q,\lambda)-\nu$ for all $\lambda\in\bR^{|\cK|}_+,\|\lambda\|_1\le B$, and $(\hat Q,\hat\lambda)$ is called a $\nu$-approximate saddle point. As implemented in \citep{agarwal2018reductions},
$\cH$ roughly consists of $h(x)=\bm{1}\{f(x)_1\ge  f(x)_{0}\}$ for $f\in\cF$ 
(in fact, a smoothed version is considered in \citep{agarwal2018reductions})
and gives
\begin{equation*}
\err(Q)=\sum_{h\in\cH}\hat{\bP}(h(X)\neq Y)Q(h)=\hat{\bP}(f(X)_Y<f(X)_{\{0,1\}\backslash Y})Q(h).
\end{equation*}
To combine our approach, we consider optimizing $$\err^{\text{new}}(Q)=\sum_{h\in\cH}\hat{\bP}(f(X)_Y-\Delta_{Y,A}\le f(X)_{\{0,1\}\backslash Y})Q(h),$$
such that $M\hat{\mu}^{\text{new}}(Q)\le \hat c$, where $\hat{\mu}^{\text{new}}(Q)=\sum_{h\in\cH}Q(h)\hat{\mu}^{\text{new}}(f)$ and $\hat{\mu}^{\text{new}}_j(f)=\hat{\bP}(f(X)_Y-\Delta_{Y,A}> f(X)_{\{0,1\}\backslash Y}|E_j)$. We can modify \textbf{ExpGrad} to optimize prime dual problems simultaneously for $$L^{\text{new}}(Q,\lambda)=\err^{\text{new}}(Q)+\lambda^\top(M\hat{\mu}^{\text{new}}(Q)-\hat c).$$ 
In practice,
while Section~\ref{sec:motivation} is 
motivated for deterministic classifiers, FIFA works for the randomized version too -- the modified \textbf{ExpGrad} can be viewed as encouraging a distribution $Q$ that puts more weights on classifiers with a certain type of margin trade-off between classes. Moreover, the modified algorithm enjoys similar convergence guarantee as the original one. 
\renewcommand{\algorithmicrequire}{\textbf{Input:}}
\renewcommand{\algorithmicensure}{\textbf{Output:}}
\newcommand{\LineComment}[1]{$\triangleright$ \textit{#1}}

\begin{wrapfigure}{L}{0.5\textwidth}
\vspace{-25pt}
\begin{minipage}{0.5\textwidth}
  \begin{algorithm}[H]
  \footnotesize
    \caption{FIFA Combined Grid Search}
    \begin{algorithmic}[1]
        \REQUIRE fairness algorithm \textbf{GridS}, training data set
        $\{x_i, y_i, a_i\}_{i=1}^n$,
        fairness tolerance $\eps$, margins $\{\Delta_{y,a}\}_{y,a}$,
        a classifier $h(\cdot;\theta)$.
      \ENSURE the learnt classifier $h^*$.
      \STATE Load training data to \textbf{GridS}. 
      \STATE \textbf{GridS} produces a set of reduction-labels $\hat{y}_{\text{train}}$ and a set of sample weights $w_{\text{train}}$ based on the type
      of fairness constraint and tolerance $\epsilon$.
      \FOR{$t=1,2,\ldots, T$}
        \STATE Compute the FIFA loss $\lfifa$
        via (\ref{eq:fifaloss}) using reduction-labels $\hat{y}_{\text{train}}$ (in mini-batches).
        \STATE Update $\theta$ in $h$ using back-propagation.
        \STATE Logging training metrics using true labels
        $\{y_{i}\}_{i=1}^n$ and attributes $\{a_{i}\}_{i=1}^n$.
      \ENDFOR
    \end{algorithmic} \label{algo:fifa}
  \end{algorithm} 
\end{minipage}
\vspace{-20pt}
\end{wrapfigure}
\begin{theorem} \label{thm:convergence}
Let $\rho=\max_f\|M\hat{\mu}^{\text{new}}(f)-\hat c\|_\infty$. For $\eta=\nu/(2\rho^2B)$, the modified \textbf{ExpGrad} will return a $\nu$-approximate saddle point of $L^{\text{new}}$ in at most $4\rho^2B^2\log(|\cK|+1)/\nu^2$ iterations.
\end{theorem}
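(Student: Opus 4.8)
The plan is to reproduce the convergence analysis of the original \textbf{ExpGrad} of \citep{agarwal2018reductions}, checking that passing from $(\err,\hat{\mu})$ to $(\err^{\text{new}},\hat{\mu}^{\text{new}})$ leaves intact every structural property the argument uses. First I would observe that $\err^{\text{new}}(Q)=\sum_{h\in\cH}Q(h)\,\hat{\bP}\big(f(X)_Y-\Delta_{Y,A}\le f(X)_{\{0,1\}\backslash Y}\big)$ and each coordinate $\hat{\mu}^{\text{new}}_j(Q)=\sum_{h\in\cH}Q(h)\,\hat{\mu}^{\text{new}}_j(f)$ are \emph{linear} in $Q\in\Delta_\cH$ and valued in $[0,1]$: the FIFA shift $\Delta_{Y,A}$ merely changes, for each fixed training point, which zero--one event is being counted, so it affects neither linearity nor boundedness. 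Hence $L^{\text{new}}(Q,\lambda)=\err^{\text{new}}(Q)+\lambda^\top\big(M\hat{\mu}^{\text{new}}(Q)-\hat c\big)$ is bilinear, being linear in $Q$ and affine in $\lambda$, and $\rho=\max_f\|M\hat{\mu}^{\text{new}}(f)-\hat c\|_\infty$ is finite.

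Next I would view the problem as the zero--sum game $\min_{Q}\max_{\lambda}L^{\text{new}}(Q,\lambda)$ solved by no-regret dynamics in the manner of Freund--Schapire, exactly as in \citep{agarwal2018reductions}. The $\lambda$-player runs exponentiated gradient over the lifted probability simplex on $|\cK|+1$ atoms, where the extra atom absorbs the slack $B-\|\lambda\|_1$ so that $\{\lambda\ge 0:\|\lambda\|_1\le B\}$ is identified with $B$ times a simplex; at round $t$ this player observes the payoff vector $M\hat{\mu}^{\text{new}}(Q_t)-\hat c$ (padded by a zero entry), whose $\ell_\infty$ norm is at most $\rho$. The $Q$-player best-responds: given $\lambda_t$ it returns a point mass on a minimizer $Q_t\in\argmin_{Q\in\Delta_\cH}L^{\text{new}}(Q,\lambda_t)$, which, because $L^{\text{new}}(\cdot,\lambda_t)$ is linear, reduces to a cost-sensitive classification problem over the training sample with per-example costs read off from $\lambda_t$, $M$, the event structure $\{E_j\}_{j\in\cJ}$, and the known margins $\{\Delta_{y,a}\}$ -- this is the reduction step, and it makes the $Q$-player's cumulative regret nonpositive (up to the oracle tolerance, which is what the parameter $\nu$ in the statement accounts for).

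It then remains to quantify the $\lambda$-player's regret. Applying the standard Hedge bound with $N=|\cK|+1$ experts and per-round linear payoffs of magnitude $O(\rho B)$ after the $B$-scaling, and choosing $\eta=\nu/(2\rho^2 B)$ as in the statement, the average regret after $T$ rounds is at most $\tfrac{2\rho^2 B^2\log(|\cK|+1)}{\nu T}+\tfrac{\nu}{2}$; combined via the Freund--Schapire lemma with the $Q$-player's nonpositive regret, the time averages $\hat Q=\tfrac1T\sum_t Q_t$ and $\hat\lambda=\tfrac1T\sum_t\lambda_t$ form an (average-regret)-approximate saddle point of $L^{\text{new}}$. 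Requiring this quantity to be at most $\nu$ forces $T\ge 4\rho^2 B^2\log(|\cK|+1)/\nu^2$, which is precisely the claimed iteration bound.

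The step I expect to be the main (if modest) obstacle is the first one: carefully verifying that the FIFA modification preserves all the hypotheses of the Freund--Schapire argument -- boundedness of $\err^{\text{new}}$ and of the constraint residuals (so that $\rho<\infty$ and controls the Hedge payoff range), linearity in $Q$ (so that averaging the iterates is legitimate), and expressibility of the per-round best response as a weighted classification problem (so the $Q$-player is genuinely no-regret up to tolerance $\nu$). Once these are in place, the remainder is a verbatim transcription of the analysis of \citep{agarwal2018reductions}, with $\rho$ reinterpreted through $\hat{\mu}^{\text{new}}$ rather than $\hat{\mu}$.
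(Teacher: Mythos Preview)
Your proposal is correct and follows essentially the same route as the paper: both reduce the convergence of the modified \textbf{ExpGrad} to the Freund--Schapire no-regret analysis underlying Theorem~1 of \citep{agarwal2018reductions}, after checking that the FIFA shift does not disturb the bilinear structure of $L^{\text{new}}$ or the boundedness of the constraint residuals. The only difference is in packaging: where you explicitly verify linearity in $Q$, boundedness, and the cost-sensitive reduction step, the paper collapses all of this into a single observation---that one may regard the FIFA-shifted predictor as an extended classifier $h:(x,y,a)\mapsto\{0,1\}$, which places the modified problem \emph{literally} inside the original framework of \citep{agarwal2018reductions} and lets their Theorem~1 apply verbatim. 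Your explicit verification and the paper's reinterpretation are two phrasings of the same reduction.
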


\paragraph{Grid search (GridS).} 
When the number of constraints is small, e.g., when there are only few sensitive attributes, 
one may directly perform a grid search on the $\lambda$ vectors to identify the
\emph{deterministic classifier}
that attains the 
best trade-off between accuracy and fairness.
In practice, 
\textbf{GridS} is preferred for larger models due to its memory efficiency, since \textbf{ExpGrad} needs to store all intermediate models to
compute the randomized classifier at
prediction time, which is less feasible
for over-parameterized models.
We describe our flexible approach in Algorithm~\ref{algo:fifa} that combines with 
\textbf{GridS} used in practice in the official code base \texttt{FairLearn} (\citep{bird2020fairlearn}). 


\begin{remark}
As we stated, the above algorithm is just one of the examples that can be combined with our approach. FIFA can also be applied to many other popular algorithms such as fair representation \citep{madras2018learning} and decision boundary approach \citep{zafar2019fairness}. We will discuss them in more detail in the Appendix.
\end{remark}


\section{Experiments}\label{sec:exp}

We now use our flexible approach on
several datasets in the classification
task with a sensitive attribute. Although our method is proposed for over-parameterized models, it can also boost the performance on small models. Depending on the specific
dataset and model architectures, we use
either the grid search or the exponentiated gradient method developed by \citep{agarwal2018reductions} as fairness algorithms to enforce the
fairness constraints, while adding our FIFA loss in the inner training loop. Note that our method can be combined with other fairness algorithms.

\paragraph{Datasets.} We choose both a large image dataset and two simpler datasets. We use the official train-test split of these datasets.
More details and statistics are in the Appendix. \textbf{(i).~CelebA} (\citep{liu2015faceattributes}): the task is to predict whether
    the person in the image has blond hair or not where the sensitive attribute is the gender of the person.
\textbf{(ii).~AdultIncome} (\citep{Dua2019uci}): the task is to predict whether the income is above $50$K per year, where the sensitive attribute is the gender.
\textbf{(iii).~DutchConsensus} (\citep{dutch2001data}): the task is predict whether an individual has a prestigious occupation and the sensitive attribute is the gender.
Both AdultIncome and DutchConsensus datasets are also used in \citep{agarwal2018reductions}.

\begin{wrapfigure}{r}{0.3\textwidth}
    \vspace{-12pt}
        \centering
        \includegraphics[width=\linewidth]{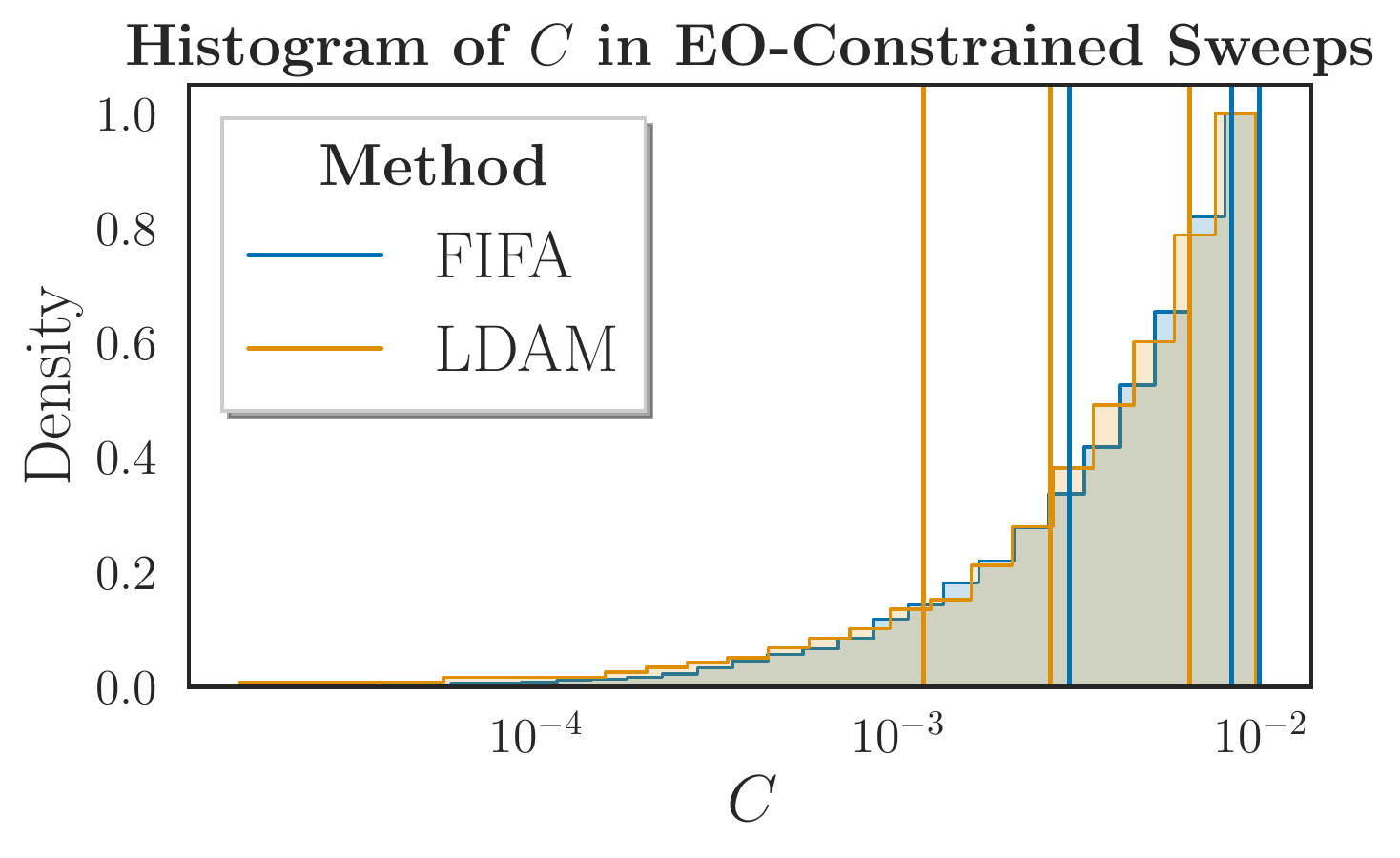}
    	\caption{Histogram (cumulative density) of hyper-parameter $C$ in the sweeps for FIFA and LDAM. Vertical lines mark the values corresponding to the best performing models in Table~\ref{tab:celeba}.} \label{fig:hist:C}
    \vspace{-10pt}
\end{wrapfigure}
\captionsetup{font=normal,labelfont={bf},skip=10pt}
\paragraph{Method.} Due to computational feasibility (\textbf{ExpGrad} needs to store all intermediate models at
prediction time), we
combine Grid Search with FIFA for the CelebA dataset and ResNet-18 and use both
Grid Search and Exponentiated Gradient on the AdultIncome with logistic regression. Besides $C$ and $\delta_{i,a}$, we also treat $\alpha$ as tuning parameters (in Eq.~\eqref{eq:M}). We then perform hyper-parameter
sweeps on the grids (if used) over $C$, $\delta_{i,a}$ and $\alpha$ for FIFA, and grids (if used) for vanilla training (combine fairness algorithms with the vanilla softmax-cross-entropy loss).
The sweeps are done on the \texttt{wandb} platform \citep{wandb}, where all hyper-parameters except for the grid, are searched using its built-in Bayesian backend.
All models for the same dataset are trained with a fixed number of epochs where the training accuracies converge. Batch training with size $128$ is used for CelebA and full batch training is used for AdultIncome. More details are included in the Appendix. We compare FIFA with directly applying fair algorithms to NN's trained with softmax-cross-entropy loss, which is the most natural baseline. Also as a special case of FIFA, when $\delta_{i,a} = 0$ for all $i, a$ and $\alpha = 0$ the FIFA loss degenerates to the LDAM loss proposed in \citep{cao2019learning} that is not as fairness-aware as FIFA, so we briefly discuss it in Table \ref{tab:celeba} too. FIFA further finetunes $\delta_{i,a}$ and $\alpha$, and to ensure a fair comparison, we set the same coverage for the the common hyper-parameter $C$ in the sweeps, as shown in Fig.~\ref{fig:hist:C}. 

\paragraph{Evaluation and Generalization.} When evaluating the model, we are mostly interested in the generalization performance measured by a \emph{combined loss} that take into consideration both fairness violation and balanced error. We define the combined loss as $\Lcomb[f] = \frac{1}{2}\Lbal[f] + \frac{1}{2}\Lfair[f]$, 
which favors those classifiers that have a equally well-performance in terms
of classification and fairness.
We consider both the value of the combined loss evaluated on the test set $\cS_{\text{test}}$, and the \emph{generalization gap} for a loss $\cL$ is
defined as $\gap[\cL, f] = \abs{\cL[f](\cS_{\text{test}})- \cL[f](\cS_{\text{train}})}.$
\subsection{Effectiveness of FIFA on over-parameterized models} \label{sec:exp:large}
\definecolor{Gray}{gray}{0.85}
\newcolumntype{g}{>{\columncolor{Gray}}c}
\begin{table*}
    \scriptsize\centering
    \resizebox{\textwidth}{!}{%
\begin{tabular}{lllllllllll}
\toprule
               \multicolumn{2}{c}{\bf Fairness Tolerance $\epsilon$} & \multicolumn{3}{c}{\bf 0.01} & \multicolumn{3}{c}{\bf 0.05} & \multicolumn{3}{c}{\bf 0.10} \\
               \multicolumn{2}{c}{\bf Method}  & {\bf FIFA} & {\bf LDAM} & {\bf Vanilla} & {\bf FIFA} & {\bf LDAM} & {\bf Vanilla} & {\bf FIFA} & {\bf LDAM} & {\bf Vanilla} \\
\midrule
\multirow{3}{*}{\bf Combined Loss} & Train &   7.37\% &   5.22\% &      7.14\% &                 5.46\% &   5.47\% &      8.84\% &                 5.92\% &   8.48\% &      8.90\% \\
               & Test &   {\bf 6.71\%} &   7.29\% &     14.01\% &                 {\bf 6.34\%} &   7.38\% &     13.05\% &                 {\bf 6.54\%} &   7.34\% &     16.71\% \\
               & Gap &   {\bf 0.66\%} &   2.07\% &      6.87\% &                 {\bf 0.88\%} &   1.91\% &      4.21\% &                 {\bf 0.62\%} &   1.14\% &      7.82\% \\
\midrule
\multirow{3}{*}{\bf Fairness Violation} & Train &   5.31\% &   2.32\% &      6.69\% &                 2.63\% &   2.57\% &      9.45\% &                 3.11\% &   6.93\% &     11.37\% \\
               & Test &   {\bf 2.75\%} &   5.39\% &     20.29\% &                 {\bf 3.29\%} &   5.57\% &     17.92\% &                 {\bf 2.65\%} &   2.96\% &     26.15\% \\
               & Gap &    {\bf 2.57\%} &   3.07\% &     13.59\% &                 {\bf 0.66\%} &   3.00\% &      8.47\% &                 {\bf 0.46\%} &   3.97\% &     14.78\% \\
\bottomrule
\end{tabular}
    }
\caption{Grid search with EO constraint on CelebA 
    dataset \citep{liu2015faceattributes} using ResNet-18, best results with respect to test combined loss  among sweeps of hyper-parameters are shown.
    As an interesting special case of our FIFA method, we note that although the LDAM method improves the performance compared with vanilla GS, it is not as effective as our method.}
    \label{tab:celeba}
\end{table*}

\begin{figure*}[t]
    \centering
    \begin{subfigure}[t]{0.25\textwidth}
	    \centering
	    \includegraphics[width=\linewidth]{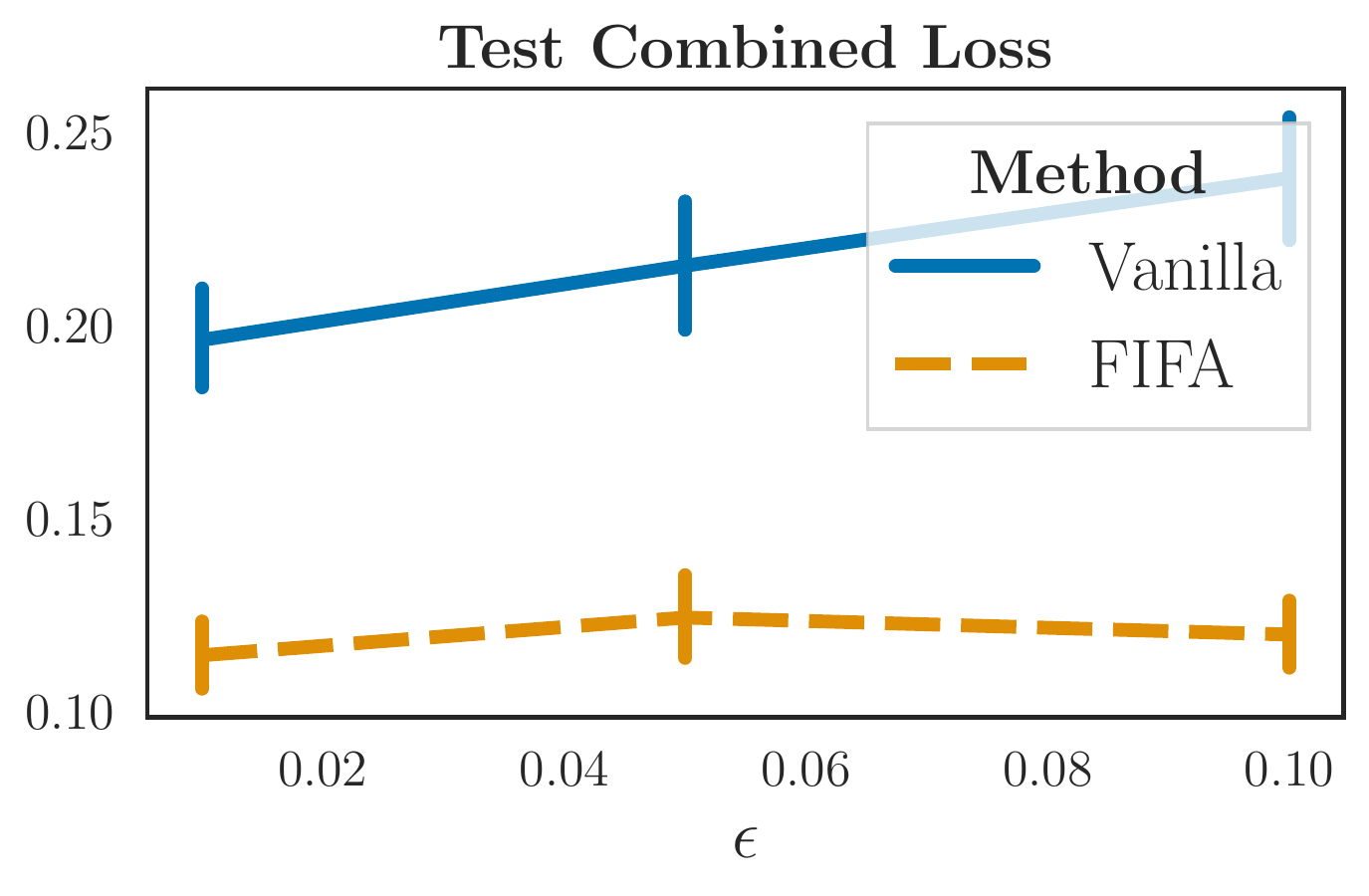}
	    \caption{$\Lcomb[f](\cS_{\text{test}})$.} \label{fig:celeba:gen:testcomb}
	\end{subfigure}%
	\begin{subfigure}[t]{0.25\textwidth}
	    \centering
	    \includegraphics[width=\linewidth]{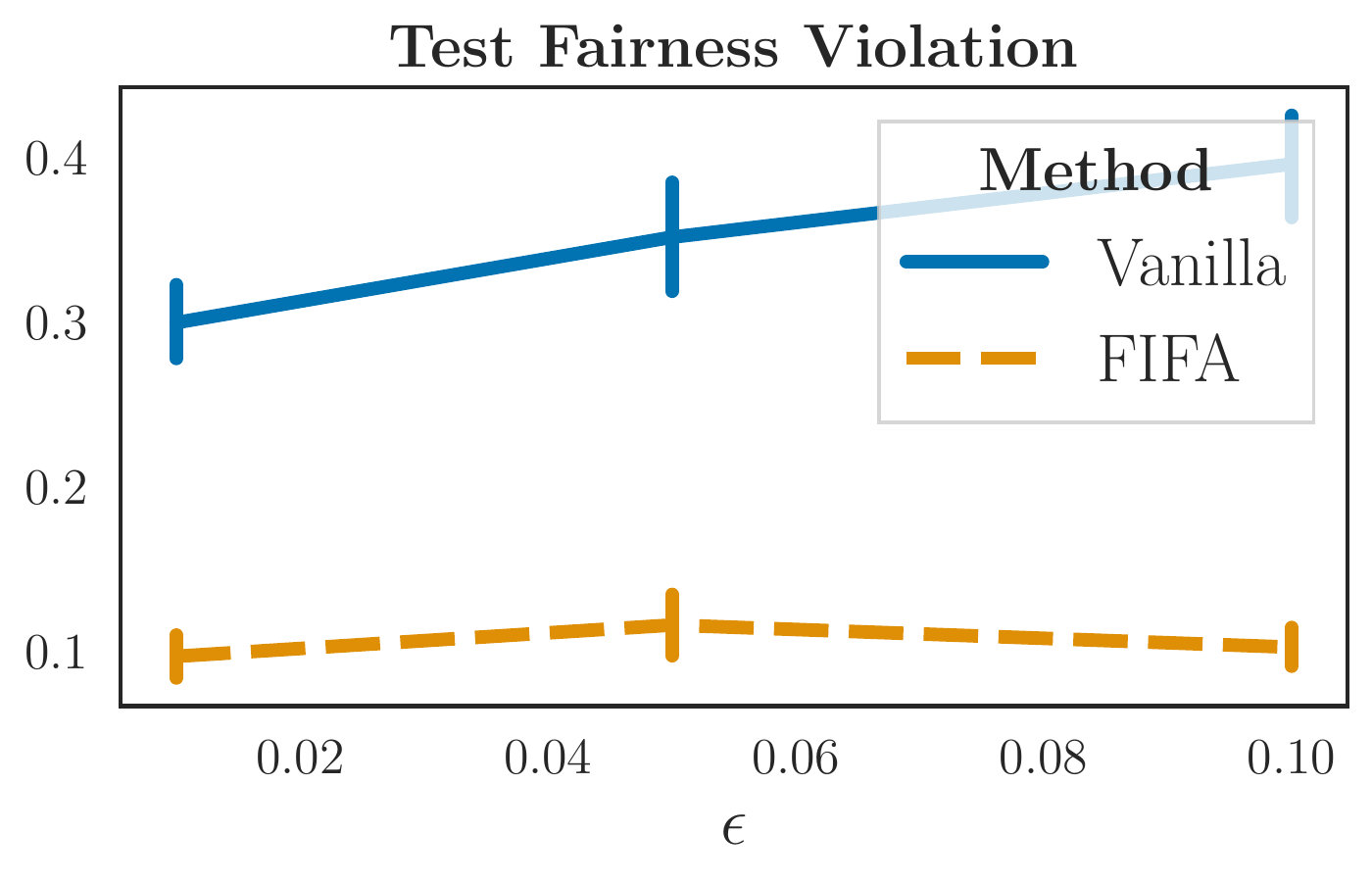}
	    \caption{$\Lfair[f](\cS_{\text{test}})$.} \label{fig:celeba:gen:testfair}
	\end{subfigure}%
    \begin{subfigure}[t]{0.25\textwidth}
	    \centering
	    \includegraphics[width=\linewidth]{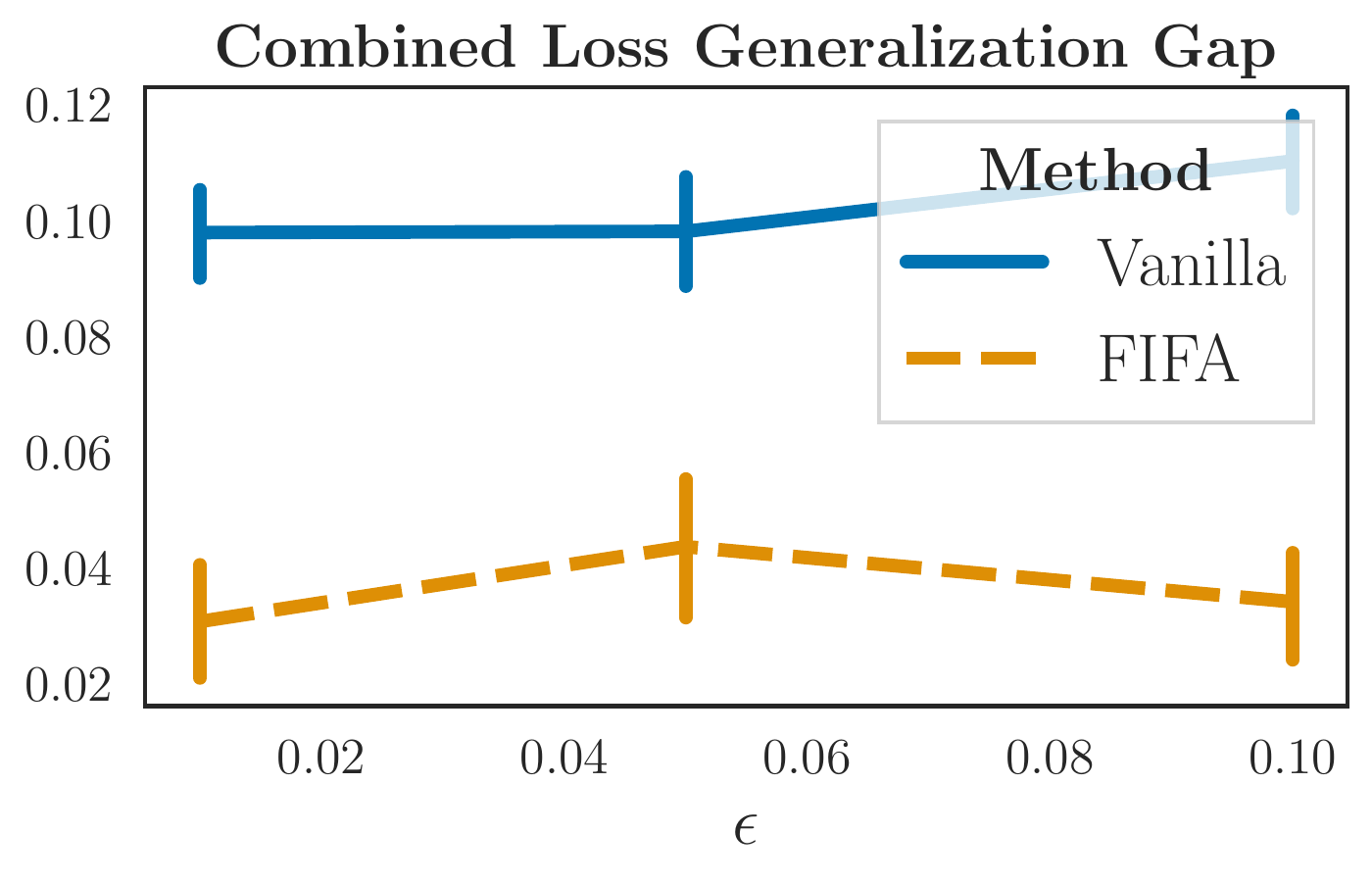}
	    \caption{$\widehat{\gap}[\Lcomb, f]$.} \label{fig:celeba:gen:combgap}
	\end{subfigure}%
	\begin{subfigure}[t]{0.25\textwidth}
	    \centering
	    \includegraphics[width=\linewidth]{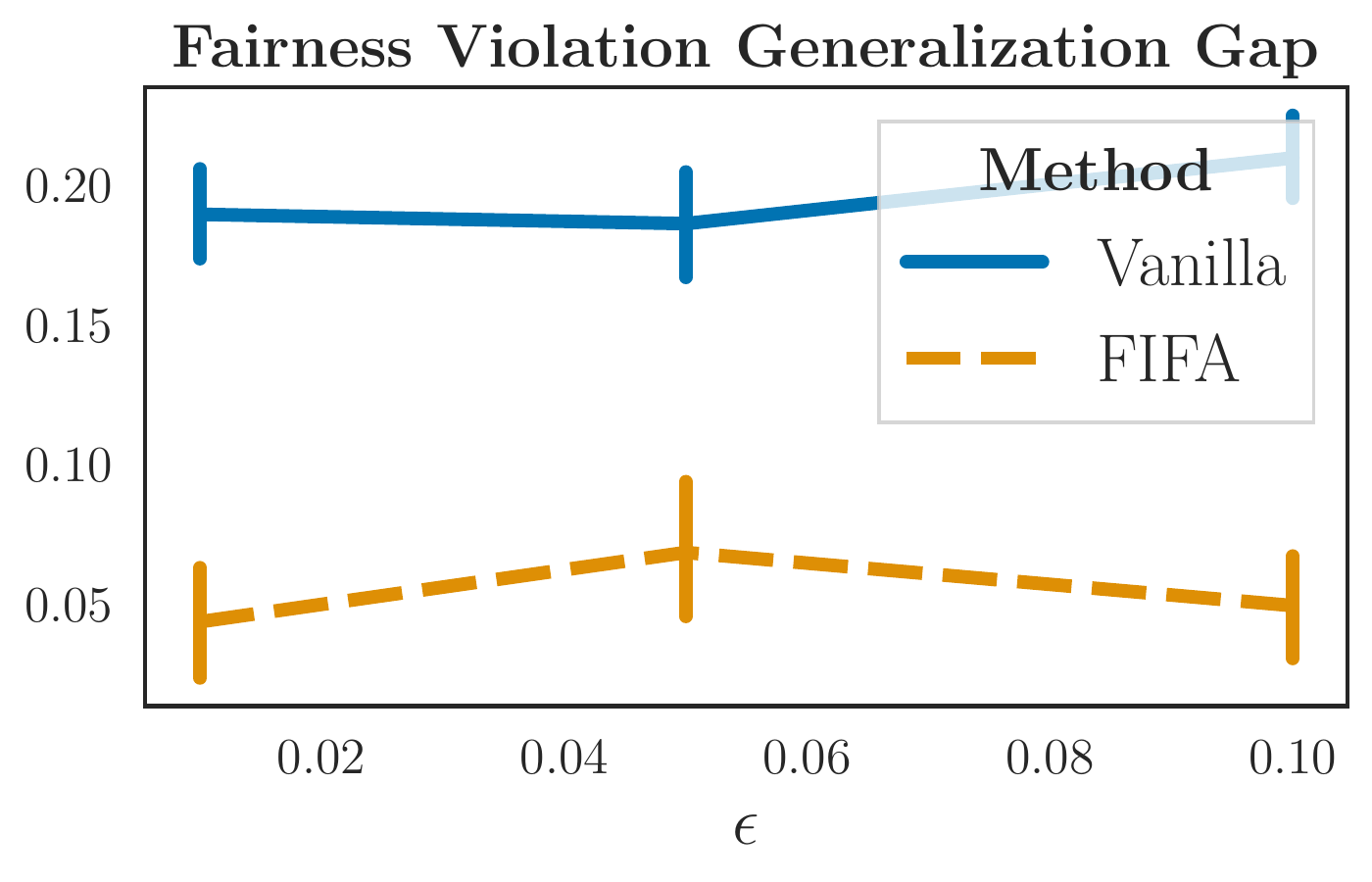}
	    \caption{$\widehat{\gap}[\Lfair, f]$.} \label{fig:celeba:gen:fairgap}
	\end{subfigure}%
	\caption{Loss on test set (\ref{fig:celeba:gen:testcomb},\ref{fig:celeba:gen:testfair}) and generalization gaps (\ref{fig:celeba:gen:combgap},\ref{fig:celeba:gen:fairgap})
	of the combined loss and the fairness loss on CelebA dataset. We repeat the experiment for $20$ times using the hyper-parameters corresponding to the best-performing models in Table~\ref{tab:celeba}. Solid blue line marks the grid search combined with vanilla training whereas dashed orange line marks grid search combined with the FIFA loss. We also plot $95\%$ confidence band based on $20$ repeated runs.
	We observe that our method FIFA has significantly better generalization performance in terms of both smaller losses on the test set as well as narrower generalization gaps.
	}
	\label{fig:celeba:gen}
\end{figure*}

In this subsection, we thoroughly analyze the results from
applying FIFA to over-parameterized models for the CelebA dataset using ResNet-18. We use the grid search algorithm
with fairness violation tolerance parameter $\eps\in\{0.01,0.05,0.1\}$ (with a little abuse of notation) for all constraints.
We perform sweeps on hyper-parameters
$C \in [0,0.01]$, $\alpha \in [0,0.01]$, $\delta_{0, \text{Male}}, \delta_{1, \text{Male}} \in [0,0.01]$,
and $\delta_{0, \text{Female}}=\delta_{1, \text{Female}}=0$.
As a special case that may be of interest, when $\alpha=0$ and 
$\delta_{0, \text{Male}}=\delta_{1, \text{Male}} =0$,
the FIFA loss coincides with the LDAM loss proposed in
\citep{Dua2019uci}, with one common hyper-parameter $C\in[0, 0.01]$.
We log the losses on the whole training and test set.
We summarize our main findings below and give more details in the Appendix including experiments with DP constraints, delayed-reweighting (DRW, \citep{cao2019learning}), and reweighting methods.

\paragraph{Logits-based methods improve fairness generalization.}
We summarize the best results for each method
under different tolerance parameter $\eps$
in Table~\ref{tab:celeba}. We note that
both FIFA and LDAM \textit{\textbf{significantly}} improve
the test performance of both
combined loss and fairness violation
among all three choices of $\eps$, while
having comparable training performance
This implies the effectiveness and necessity of using
logits-based methods to ensure a better fairness generalization.

\paragraph{FIFA accommodates for both fairness generalization and dataset imbalance.}
Although both logits-based method improve generalization as seen in Table~\ref{tab:celeba},
our method FIFA has significantly better generalization performance
compared with LDAM, especially in terms of fairness violation.
For example, when $\eps=0.01$ and $0.05$, FIFA achieves
a test fairness violation that is at least $2\%$
smaller compared with LDAM. This further demonstrates the
importance of our theoretical motivations.

\paragraph{Improvements of generalization are two-fold for 
FIFA.}
When it comes to generalization, two relevant notions
are often used, namely the absolute performance on the test set, and also the generalization gap between the training and test set. We compute the generalization gap in Table~\ref{tab:celeba} for both combined loss and fairness violation. We observe that FIFA generally dominates LDAM and vanilla in terms of both test performance and generalization gap.
We further illustrate this behavior in Fig.~\ref{fig:celeba:gen}, where we give $95\%$
confidence band over randomness in training. We note that
our FIFA significantly outperforms vanilla in a large margin in terms of both generalization notions, and the improvements are mostly
due to better fairness generalization.
In fact, as suggested by the similarity in the shapes of curves between Fig.~\ref{fig:celeba:gen:combgap} and
Fig.~\ref{fig:celeba:gen:fairgap},
fairness generalization dominates 
classification generalization, and thus improvements in fairness generalization elicit
more prominently overall.

\paragraph{Towards a more efficient Pareto frontier.}
Let us recall our motivating example in Fig.~\ref{fig:teaser} where a sufficiently well-trained ResNet-10 demonstrate stellar classification and fairness performance on the training set but poor fairness generalization on the test set. In Fig.~\ref{fig:celeba:pareto} we plot the Pareto frontier of balanced classification error ($\Lbal$) and
fairness violation ($\Lfair$) for all three choices of $\eps$. We observe that
the models trained by combining grid search with the FIFA loss achieve frontiers that are lower and more centered
compared with those trained in vanilla losses with grid search. This reaffirms that our FIFA approach mitigates the fairness generalization problem decently well.

\begin{table*}
    \scriptsize\centering
    \resizebox{\textwidth}{!}{%
\begin{tabular}{llllllllll}
\toprule
                      \multicolumn{2}{c}{\bf Dataset} & \multicolumn{4}{c}{\bf AdultIncome} & \multicolumn{4}{c}{\bf DutchConsensus} \\
                      \multicolumn{2}{c}{\bf Metric}         & \multicolumn{2}{c}{\bf Combined Loss} & \multicolumn{2}{c}{\bf Fairness Violation} & \multicolumn{2}{c}{\bf Combined Loss} & \multicolumn{2}{c}{\bf Fairness Violation} \\
                      $\epsilon$ & {\bf Method}         &         Train &   Test &              Train &   Test &          Train &   Test &              Train &  Test \\
\midrule
\multirow{2}{*}{0.01} & FIFA &      15.7217\% & {\bf 13.4812\%} &            7.8863\% &  {\bf 2.7776\%} &       12.8013\% & {\bf 13.1686\%} &            3.7220\% & {\bf 4.3532\% }\\
                      & Vanilla &      13.9561\% & 14.3001\% &            6.7861\% &  6.7475\% &       12.8444\% & 13.2267\% &            3.7935\% & 4.4323\% \\
\multirow{2}{*}{0.05} & FIFA &      13.5634\% & {\bf 13.5491\%} &            5.7088\% &  {\bf 4.9315\%} &       12.8820\% & {\bf 13.2228\%} &            3.8525\% & {\bf 4.4323\%} \\
                      & Vanilla &      14.4697\% & 14.8647\% &            7.5962\% &  7.8575\% &       12.8818\% & 13.2236\% &            3.8525\% & 4.4323\% \\
\multirow{2}{*}{0.10} & FIFA &      13.5857\% & {\bf 13.9043\%} &            6.1217\% &  {\bf 5.9689\%} &       12.8717\% & {\bf 13.1748\%} &            3.8326\% & {\bf 4.3532\%} \\
                      & Vanilla &      15.5342\% & 15.9387\% &            9.7514\% & 10.0750\% &       12.8757\% & 13.2099\% &            3.8392\% & 4.4059\% \\
\bottomrule
\end{tabular}
    }
\caption{Exponentiated gradient with EO constraint on the AdultIncome and DutchConsensus 
    datasets using logistc regression (as a one-layer neural net), best results with respect to test combined loss among sweeps of hyper-parameters are shown.}
    \label{tab:adultincome}
\end{table*}

\begin{figure*}[t]
    \centering
    \begin{subfigure}[t]{0.33\textwidth}
	    \centering
	    \includegraphics[width=\linewidth]{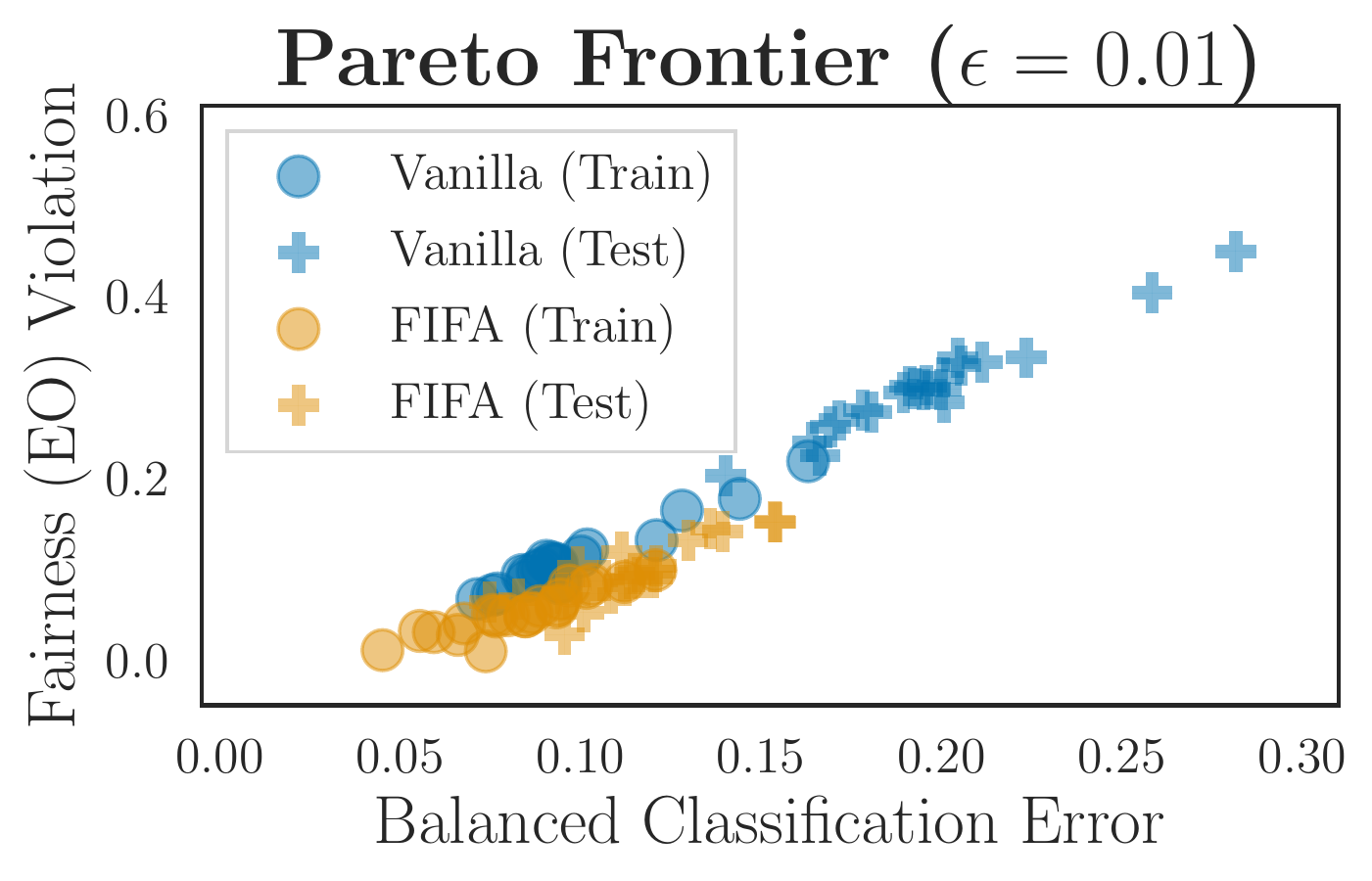}
	    \caption{$\epsilon=0.01$.} \label{fig:celeba:pareto:0.01}
	\end{subfigure}~
	\begin{subfigure}[t]{0.33\textwidth}
	    \centering
	    \includegraphics[width=\linewidth]{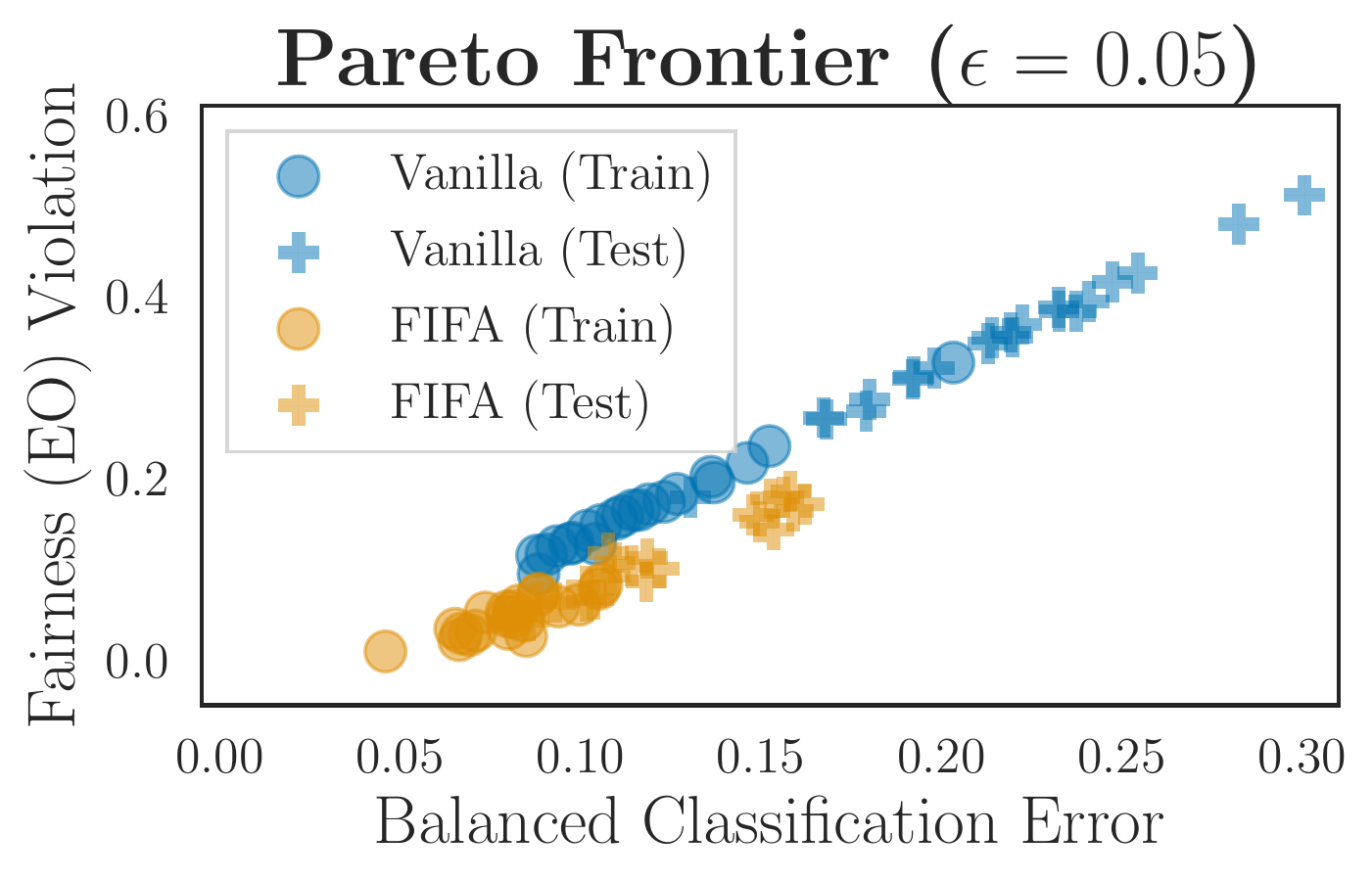}
	    \caption{$\epsilon=0.05$.} \label{fig:celeba:pareto:0.05}
	\end{subfigure}%
    \begin{subfigure}[t]{0.33\textwidth}
	    \centering
	    \includegraphics[width=\linewidth]{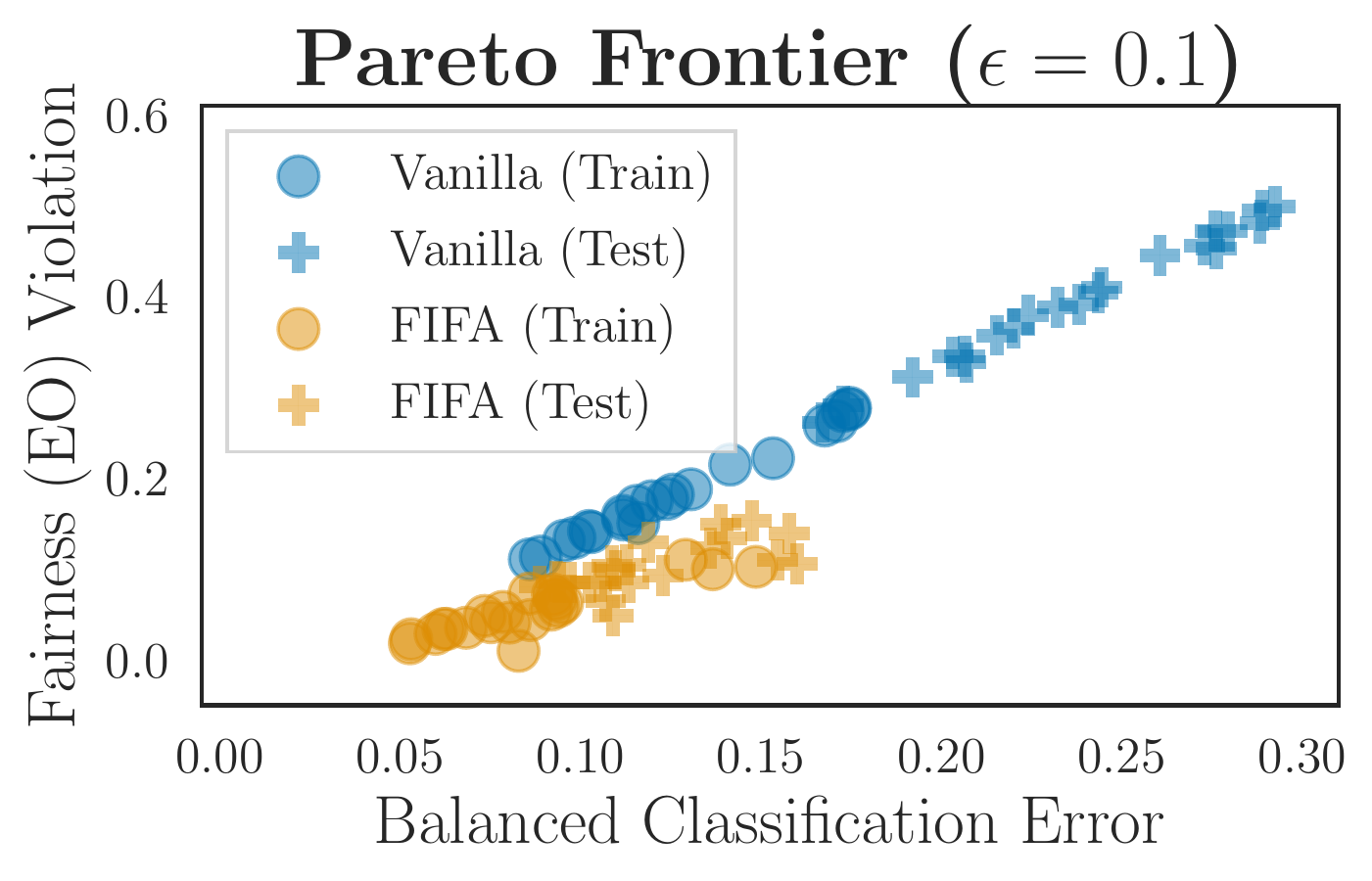}
	    \caption{$\epsilon=0.10$.} \label{fig:celeba:pareto:0.1}
	\end{subfigure}%
	\caption{Pareto frontiers of the balanced loss ($\Lbal$) and fairness loss ($\Lfair$) of CelebA using ResNet-18 with grid search combined with FIFA and vanilla softmax-cross-entropy loss respectively. Best-performing hyper-parameters from Table~\ref{tab:celeba} are used, where each configuration is repeated $20$ times independently. Here blue and orange markers correspond to vanilla and FIFA respectively, and circular and cross markers correspond to training and testing metrics respectively. We observe that our FIFA method is effective in significantly lowering the Pareto frontier comparing with the vanilla method, implying that FIFA mitigates fairness generalization issues as seen in
	Figure~\ref{fig:teaser}.}
	\label{fig:celeba:pareto}
\end{figure*}

\subsection{Effectiveness of FIFA on smaller datasets and models}
We use logistic regression (implemented as a one-layer neural net) for the AdultIncome
and DutchConsensus datasets  with similar sweeping procedure are similar to those described in Section~\ref{sec:exp:large} except that we set
$\delta_{0, \text{Female}}, \delta_{1, \text{Female}} \in [0,0.01]$,
and $\delta_{0, \text{Male}}=\delta_{1, \text{Male}}=0$ for the AdultIncome dataset 
and
$\delta_{0, \text{Male}}, \delta_{1, \text{Female}} \in [0,0.01]$,
and $\delta_{0, \text{Female}}=\delta_{1, \text{Male}}=0$ for the DutchConsensus Dataset.

\paragraph{Results.} We tabulate the best-performing models (in terms of test combined loss) among sweeps in Table~\ref{tab:adultincome} and include more details in the Appendix. From Table~\ref{tab:adultincome}, we observe the same message as we did in Section~\ref{sec:exp:large}, namely, our method FIFA outperforms vanilla on both dataset across three different tolerance parameter $\eps$, illustrating the effectiveness of FIFA on boosting fairness generalization on smaller datasets and models as well. Nonetheless, since the datasets are much simpler in this case, the improvements are not as large as in the case for CelebA and larger models.

    
    
    
    
    

\section{Conclusions}
Generalization (especially in over-parameterized models)
has always been an important and difficult problem in machine learning research. In this paper, we set out the first exposition in the study of the generalization of \emph{fairness constraints} that has  previously been overlooked. Our theoretically-backed FIFA approach is shown to mitigate poor fairness generalization observed in vanilla models large or small. In our paper, the analysis is based on margin upper bounds. We will leave a more fine-grained analysis of the margins to the future work. Our work is motivated by the societal impact of improving fairness across subpopulations. Methods like FIFA should be used with care and may not fully mitigate bias; better training data is still needed in many settings. 

\section*{Acknowledgements}
The research of Z.~Deng is supported by the Sloan Foundation grants, the NSF grant 1763665, and the Simons Foundation Collaboration on the Theory of Algorithmic Fairness. J.~Zhang is supported by ONR Contract N00015-19-1-2620.
W.J.~Su is supported in part by NSF through CCF-1934876, an Alfred Sloan Research Fellowship, and the Wharton Dean’s Research Fund. 
We would like to thank
Dan Roth and the Cognitive Computation Group at the University of Pennsylvania for stimulating
discussions and for providing computational resources.
J.~Zhang also thanks Chenwei Wu from Duke University for helpful discussions.

{\small
\bibliography{ref,zhun}
}




\setcounter{section}{1}\setcounter{equation}{0}
\numberwithin{equation}{section}

\clearpage
\appendix

\noindent{\Large \textbf{Appendix}}

\section{Omitted derivation}
In this section, we will talk about several missing details in the main context.

\subsection{Extension to multi-classes and multi-groups for equalized odds}\label{app:1}
We discussed the case that $\cY=\{0,1\}$ and $\cA=\{a_1,a_2\}$ in the main context. In this subsection, we discuss the extension to multi-classes and multi-groups.

First, we extend the case to $\cY=\{0,1\}$ and $|\cA|\ge 2$. In that case, we can consider the constraints $$\sum_{a,a'\in\cA,a\neq a'}\sum_{i\in\cY}\left|\bP(h(X)=i|Y=i,A=a)-\bP(h(X)=i|Y=i,A=a')\right|,$$
regardless of the order of $(a,a')$, and there are $|\cA|\choose 2$ pairs of $(a,a')$'s.

Thus, our performance criteria $\cM_{\text{multi-groups}}[f]$ can be taken as:
\begin{equation*}
 \Lbal[f]+\alpha \sum_{a,a'\in\cA,a\neq a'}\sum_{i\in\cY}\left|\bP(h(X)=i|Y=i,A=a)-\bP(h(X)=i|Y=i,A=a')\right|.
 \end{equation*}
 
 Then, via Lemma~\ref{lm:complete}, for all $f\in\cF$
\begin{align*}
\cM_{\text{multi-groups}}[f]&\lesssim\sum_{i\in\cY}\frac{1}{\gamma_i}\sqrt{\frac{C(\cF)}{n_i}}+ \sum_{i\in\cY,a\in\cA}\frac{\alpha (|A|-1)}{\gamma_{i,a}}\sqrt{\frac{C(\cF)}{n_{i,a}}}\\
&\le \sum_{i\in\cY}\frac{1}{\gamma_i}\sqrt{\frac{C(\cF)}{n_i}}+ \sum_{i\in\cY,a\in\cA}\frac{\alpha(|A|-1)}{\gamma_{i}}\sqrt{\frac{C(\cF)}{n_{i,a}}}.
\end{align*}
We overload the notation $\alpha$, then we have 
\begin{equation}\label{eq:appmultigroup}
\cM_{\text{multi-groups}}[f]\lesssim\sum_{i\in\cY}\frac{1}{\gamma_i}\sqrt{\frac{C(\cF)}{n_i}}+ \sum_{i\in\cY,a\in\cA}\frac{\alpha}{\gamma_{i}}\sqrt{\frac{C(\cF)}{n_{i,a}}}.
\end{equation}
Notice the difference between Eq.~\eqref{eq:appmultigroup} and Eq.~\eqref{eq:M} is that in Eq.~\eqref{eq:appmultigroup}, $|\cA|\ge 2$. Thus, by similar proof as in Theorem~\ref{thm:appeomargin}, we can obtain 
$$\gamma_0/\gamma_1=\tilde{n}_0^{-1/4}/\tilde n_1^{-1/4},$$
where the adjusted sample size 
$$\tilde n_i=\frac{n_i\Pi_{a}n_{i,a}}{(\sqrt{\Pi_{a}n_{i,a}}+\alpha\sum_{j\in\cA}\sqrt{n_i\Pi_{a\in\cA^{\backslash j}}n_{i,a}})^2},$$
for $i\in \{0,1\}$.

Given the results above, for multiple classes with multiple groups, for $i,j\in\cY$, 
$$\gamma_i/\gamma_j=\tilde{n}_i^{-1/4}/\tilde n_j^{-1/4},$$
where the adjusted sample size $\tilde n_i=\frac{n_i\Pi_{a}n_{i,a}}{(\sqrt{\Pi_{a}n_{i,a}}+\alpha\sum_{j\in\cA}\sqrt{n_i\Pi_{a\in\cA^{\backslash j}}n_{i,a}})^2}$
for $i\in \cY$.

\paragraph{FIFA for multi-classes and multi-groups.} We will demonstrate how to apply the above motivations to design better margin losses. Consider a logits-based loss
\begin{equation*}
\ell((x,y);f)=\ell(f(x)_y, \{f(x)_i\}_{i\in\cY\backslash y}),
\end{equation*}
which is non-increasing with respect to its first coordinate if we fix the second coordinate.

Our flexible imbalance-fairness-aware (FIFA) approach modifies the above losses during training
\begin{equation}
\lfifa((x,y,a);f)=\ell(f(x)_y-\Delta_{y,a}, \{f(x)_i\}_{i\in\cY\backslash y})
\end{equation}
where $\Delta_{i,a}=C/\tilde{n}_i^{1/4}+\delta_{i,a}$, and $\delta_{i,a}\ge 0$. The specific assignment of $\delta_{i,a}\ge 0$ is described in Section~\ref{subsec:delta}.

\subsection{Assignment of $\delta_{i,a}$ for multi-groups}\label{subsec:delta}
In this subsection, we describe how to choose $\delta_{i,a}$ for multiple demographic groups. Recall in Section~\ref{app:1}, for multiple demographic groups,
\begin{align}\label{eq:multi-group}
\cM_{\text{multi-groups}}[f]&\lesssim\sum_{i\in\cY}\frac{1}{\gamma_i}\sqrt{\frac{C(\cF)}{n_i}}+ \sum_{i\in\cY,a\in\cA}\frac{\alpha}{\gamma_{i,a}}\sqrt{\frac{C(\cF)}{n_{i,a}}}.
\end{align}
Given $\gamma_{i,a}\ge \gamma_i$, similar as in the main context, we can take $\gamma_{i,a}=\gamma_i+\delta_{i,a}$, where $\delta_{i,a}\ge 0$ are tuning parameters.

Assume there are $k$ groups. Let us first ordering $|S_{i,a}|$ by in a decreasing order. Without loss of generality, $|S_{i,a_1}|\ge|S_{i,a_2}|\ge\cdots\ge |S_{i,a_k}| $. Thus, when tune the parameters $\delta_{i,a}$'s, we set $\delta_{i,a_1}=0$ (or we can randomly choose other $a$'s to set $\delta_{i,a}=0$ if there are ties and $|S_{i,a}|=|S_{i,a_1}|$, but for simplicity, we ignore this case), and we make sure $\delta_{i,a_1}\le \delta_{i,a_2}\le \cdots \delta_{i,a_k}$. This is optimal in the sense that if there are $k$ constants $0=\delta_1\le\delta_2\le \delta_3\le \cdots\le \delta_k$, then 
\begin{equation*}
\sum_{i\in\cY,a_j\in\cA}\frac{\alpha}{\gamma_i+\delta_{i,a_j}}\sqrt{\frac{C(\cF)}{n_{i,a_j}}} \le \sum_{i\in\cY,a_j\in\cA}\frac{\alpha}{\gamma_i+\delta_{i,a_{\sigma(j)}}}\sqrt{\frac{C(\cF)}{n_{i,a_j}}},
\end{equation*}
where $\sigma(\cdot)$ is a permutation. In other words, our way to assign $\delta_{i,a}$'s can make the upper boupnd on RHS of  Eq.~\eqref{eq:multi-group} optimal. This is a direct application of rearrangement inequality, see Lemma~\ref{lm:rearrange}.

\begin{lemma}\label{lm:rearrange}
For $x_1\le x_2\le\cdots\le x_k$, $y_1\le y_2\le\cdots\le y_k$, any permutation $\sigma(\cdot)$
$$x_{k}y_1+x_{k-1}y_2+\cdots+x_{1}y_k\le x_{\sigma(1)}y_1+x_{\sigma(2)}y_2+\cdots+x_{\sigma(k)}y_k\le x_1y_1+x_2y_2+\cdots+x_ky_k.$$
\end{lemma}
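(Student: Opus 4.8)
\textbf{Proof proposal for Lemma~\ref{lm:rearrange} (rearrangement inequality).}

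The plan is to prove the rearrangement inequality directly by an exchange (bubble-sort) argument, establishing first the upper bound and then deducing the lower bound by symmetry. Concretely, let $\sigma$ be any permutation of $[k]$ and consider the sum $S(\sigma) = \sum_{j=1}^{k} x_{\sigma(j)} y_j$. I will argue that $S(\sigma)$ is maximized when $\sigma$ is the identity (so that the large $x$'s are paired with the large $y$'s) and minimized when $\sigma$ is the order-reversing permutation.

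The key step is the following \emph{two-element swap lemma}: suppose $j < l$ but $\sigma(j) > \sigma(l)$ (an ``inversion''), and let $\sigma'$ be the permutation obtained from $\sigma$ by swapping the values at positions $j$ and $l$. Then
\begin{equation*}
S(\sigma') - S(\sigma) = \big(x_{\sigma(l)} - x_{\sigma(j)}\big)\big(y_j - y_l\big) \ge 0,
\end{equation*}
since $x_{\sigma(l)} \le x_{\sigma(j)}$ (because $\sigma(l) < \sigma(j)$ and the $x$'s are nondecreasing) and $y_j \le y_l$ (because $j < l$ and the $y$'s are nondecreasing), so the product of the two nonpositive factors is nonnegative. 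Thus any swap that removes an inversion does not decrease $S$. Starting from an arbitrary $\sigma$ and repeatedly removing inversions (as in bubble sort), we reach the identity permutation in finitely many steps, each step not decreasing the sum; hence $S(\sigma) \le S(\mathrm{id}) = \sum_j x_j y_j$, which is the right-hand inequality. For the left-hand inequality, apply the same argument with the roles reversed: a swap that \emph{creates} an inversion (i.e.\ $j<l$, $\sigma(j)<\sigma(l)$, swapped) changes $S$ by $(x_{\sigma(l)}-x_{\sigma(j)})(y_j-y_l) \le 0$, so moving toward the order-reversing permutation never increases $S$; iterating gives $S(\sigma) \ge \sum_j x_{k+1-j} y_j = x_k y_1 + \cdots + x_1 y_k$. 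This is exactly the claimed chain of inequalities.

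There is no real obstacle here; the only point requiring a small amount of care is verifying that the bubble-sort procedure terminates, which is standard (the number of inversions is a nonnegative integer that strictly decreases at each swap of the first kind), and that the argument applies verbatim even when there are ties among the $x_i$'s or $y_i$'s (equalities only make some of the intermediate differences zero, which is harmless). The application of the lemma in the paper's setting is then immediate: taking $x_j \mapsto \tfrac{\alpha}{\gamma_i + \delta_{i,a_j}}\sqrt{C(\cF)}$ (which is nonincreasing in $\delta_{i,a_j}$, hence, after sorting $|S_{i,a_j}|$ in decreasing order and assigning $\delta_{i,a_1} \le \cdots \le \delta_{i,a_k}$, is nondecreasing in $j$ in the same order that $1/\sqrt{n_{i,a_j}}$ is nondecreasing) and $y_j \mapsto 1/\sqrt{n_{i,a_j}}$, the ``similarly sorted'' pairing minimizes $\sum_j x_j y_j$ among all permutations, which is precisely the optimality claim in Section~\ref{subsec:delta}.
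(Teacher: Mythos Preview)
Your proof of the rearrangement inequality via the swap/bubble-sort argument is correct and is the standard textbook proof. Note, however, that the paper does not actually prove Lemma~\ref{lm:rearrange}: it simply states the classical rearrangement inequality and invokes it, so there is nothing to compare your argument against on the paper's side.

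One minor slip in your final paragraph on the application: with $\delta_{i,a_1}\le\cdots\le\delta_{i,a_k}$ the quantity $\tfrac{\alpha}{\gamma_i+\delta_{i,a_j}}$ is \emph{nonincreasing} in $j$, not nondecreasing as you wrote. Since $1/\sqrt{n_{i,a_j}}$ is nondecreasing in $j$ (because $n_{i,a_1}\ge\cdots\ge n_{i,a_k}$), the two sequences are \emph{oppositely} ordered, and it is the left-hand (oppositely sorted) side of the rearrangement inequality that gives the minimum---which is exactly the optimality claim in Section~\ref{subsec:delta}. Your conclusion is right; only the wording ``similarly sorted pairing minimizes'' should be corrected to ``oppositely sorted pairing minimizes.''
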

\subsection{Derivation for other fairness notions}
In this subsection, we consider theory-inspired derivation for other fairness notions. For simplicity, we still focus on the case that $\cY=\{0,1\}$ (this is necessary for EqOpt) and $\cA=\{a_1,a_2\}$. Given the derivation in this subsection, we can further derive FIFA for other fairness notions as in Section~\ref{app:1}.

\paragraph{Equalized opportunity.}   Specifically, for EqOpt, the aim is:
 \begin{align*}
  &\min_f \Lbal[f]\\
  \text{s.t.}~\forall y\in\cY, a\in \cA,~~\bP(h(X)&=1|Y=1,A=a)=\bP(h(X)=1|Y=1),
\end{align*}
This is a simple version of EO in some sense. Directly using the derivation in Section~\ref{app:1}, we have $$\gamma_0/\gamma_1=n_0^{-1/4}/\tilde n_1^{-1/4},$$
where the adjusted sample size 
$$\tilde n_1 =\frac{n_1n_{1,a_1}n_{1,a_2}}{(\sqrt{n_{1,a_1}n_{1,a_2}}+\alpha(\sqrt{n_1n_{1,a_2}}+\sqrt{n_1n_{1,a_1}}))^2}.$$

\paragraph{Demographic parity.} Similar to EO, for DP, the optimization aims to:
\begin{align*}
  &\min_f \Lbal[f]\\
  s.t.~\forall y\in\cY, a\in \cA,~~\bP&(h(X)=y|A=a)=\bP(h(X)=y).
\end{align*}
In this setting, we no longer can expect all the training samples are perfectly classified and fairness constraints violation is perfectly satisfied because there exists fairness and accuracy trade-off in training phase for DP. However, in real application, people always adopt relaxation in fairness constraints, i.e. $|\bP(h(X)=y|A=a)-\bP(h(X)=y)|<\epsilon$ for some $\epsilon>0$ (if there are only two groups, one can alternatively use $|\bP(h(X)=y|A=a_1)-\bP(h(X)=y)|A=a_2|\le \epsilon$). When $\epsilon$ is large enough (or $\hat\bP(Y=1|A=a_1)$ is close to $\hat\bP(Y=1|A=a_2)$), if we use suitable models, similar as in the EO setting, we would expect all the training examples are classified perfectly while satisfying $|\hat\bP(h(X)=y|A=a_1)-\hat\bP(h(X)=y)|A=a_2|\le \epsilon$. Then, we can use similar techniques to characterize a trade-off between margins.



Specifically, simple calculation leads to $\sum_{i\in\cY}|\bP(h(X)=i|A=a_1)-\bP(h(X)=i|A=a_2)|\le \sum_{j\in\{1,2\}}\sum_{i\in \{0,1\}}\bP(Y=i|A=a_j)\cL_{i,a_j}[f]+I$, where $I$ is a term not related to $f$. Thus, our \textbf{\textit{optimization objective}} (not performance criterion) for DP can be taken as: 
\begin{equation}\label{eq:cri}
 \Lbal[f]+ \alpha\sum_{i,a}\bP(Y=i|A=a)\cL_{i,a}[f],
\end{equation}
for weight $\alpha$. We can use training data to estimate $\bP(Y=i|A=a)$. For simplicity, we can also use $\Lbal[f]+ \alpha\sum_{i,a}\cL_{i,a}[f]$, which shares the same upper bound as in Theorem~\ref{thm:eomargin} and also implies $\gamma_0/\gamma_1=\tilde{n}_0^{-1/4}/\tilde n_1^{-1/4},$ that will also be used in the experiments in later sections. We should also remark that when $\epsilon$ is too small, our method may not work, this can also be reflected in Table~\ref{tab:celeba:dp}, when $\epsilon=0.01$.

\subsection{Omitted proofs}
\subsubsection{Proof of Theorem~\ref{thm:eomargin}}
\begin{theorem}[Restatement of Theorem~\ref{thm:eomargin}]\label{thm:appeomargin}
With high probability over the randomness of the training data, for $\cY=\{0,1\}$, $\cA=\{a_1,a_2\}$, and for some proper complexity measure of class $\cF$, for any $f\in\cF$,
\begin{align*}
\begin{split}
\cM[f]\lesssim\sum_{i\in\cY}\frac{1}{\gamma_i}\sqrt{\frac{C(\cF)}{n_i}}+ \sum_{i\in\cY,a\in\cA}\frac{\alpha}{\gamma_{i,a}}\sqrt{\frac{C(\cF)}{n_{i,a}}}
\le \sum_{i\in\cY}\frac{1}{\gamma_i}\sqrt{\frac{C(\cF)}{n_i}}+ \sum_{i\in\cY,a\in\cA}\frac{\alpha}{\gamma_{i}}\sqrt{\frac{C(\cF)}{n_{i,a}}},
\end{split}
\end{align*}
where $\gamma_i$ is the margin of the $i$-th class's sample set $S_i$ and $\gamma_{i,a}$ is the margin of demographic subgroup's sample set $S_{i,a}$.
\end{theorem}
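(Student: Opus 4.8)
The plan is to reduce $\cM[f]$ to a weighted sum of conditional $0$--$1$ classification errors — one per label class and one per demographic subgroup — and then invoke a standard margin-based generalization bound on each term. First I would decompose the criterion. Because $\cP_{\bal}$ is the uniform mixture $\tfrac12(\cP_0+\cP_1)$ (here $k=2$), we have $\Lbal[f]=\tfrac12\sum_{i\in\cY}\cL_i[f]$ with $\cL_i[f]=\bP_{x\sim\cP_i}[f(x)_i<\max_{l\neq i}f(x)_l]$. For the fairness term, note that $\bP(h(X)=i\mid Y=i,A=a)=1-\cL_{i,a}[f]$ where $\cL_{i,a}[f]=\bP_{x\sim\cP_{i,a}}[f(x)_i<\max_{l\neq i}f(x)_l]$; hence each summand in the fairness violation equals $|\cL_{i,a_1}[f]-\cL_{i,a_2}[f]|\le \cL_{i,a_1}[f]+\cL_{i,a_2}[f]$. (Equivalently, since the well-separation assumption makes the empirical conditional errors vanish, the empirical fairness violation is $0$, so it suffices to bound the test-time quantities.) Putting these together, $\cM[f]\lesssim \sum_{i\in\cY}\cL_i[f]+\alpha\sum_{i\in\cY,a\in\cA}\cL_{i,a}[f]$, so it is enough to control each conditional error on the right.

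Next I would apply a margin bound to each group. Fix a class $i$; the subgroup case $(i,a)$ is identical with $n_i,\gamma_i$ replaced by $n_{i,a},\gamma_{i,a}$. Conditioning on the realized sample size $n_i$, the points of $S_i$ are i.i.d.\ draws from $\cP_i$, and by well-separation every one of them is classified with margin at least $\gamma_i=\min_{j\in S_i}\gamma(x_j,y_j)$, so the empirical $\gamma_i$-margin loss of $f$ on $S_i$ is zero. Invoking the standard margin-based generalization bound used in \citep{kakade2008complexity,cao2019learning} — bound the $0$--$1$ error $\cL_i[f]$ by the empirical $\gamma_i$-margin loss, plus $1/\gamma_i$ times the Rademacher complexity (or covering-number term) of $\cF$ composed with the ramp surrogate, plus a $\sqrt{\log(1/\delta)/n_i}$ deviation term — gives, uniformly over $f\in\cF$ and with probability at least $1-\delta$, $\cL_i[f]\lesssim \tfrac1{\gamma_i}\sqrt{C(\cF)/n_i}$ for the appropriate complexity measure $C(\cF)$ (absorbing logarithmic and lower-order terms), and likewise $\cL_{i,a}[f]\lesssim \tfrac1{\gamma_{i,a}}\sqrt{C(\cF)/n_{i,a}}$.

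To conclude I would take a union bound over the finitely many (a constant number: $|\cY|=2$ classes and $|\cY|\cdot|\cA|=4$ subgroups) high-probability events — this costs only a constant/logarithmic factor, absorbed into the statement — and sum the per-group bounds, which yields the first inequality for $\cM[f]$. The second inequality is immediate: since $\gamma_i=\min\{\gamma_{i,a_1},\gamma_{i,a_2}\}$ we have $\gamma_{i,a}\ge\gamma_i$, hence $1/\gamma_{i,a}\le 1/\gamma_i$ for every $(i,a)$.

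The hard part will be making the per-group margin bound rigorous: one must fix a concrete complexity measure (a Rademacher-complexity or covering-number bound for the margin-loss class), control the Lipschitz constant of the ramp surrogate so the $1/\gamma$ factor appears correctly, and — importantly — run the argument \emph{conditionally} on the subgroup sample sizes so that the i.i.d.\ structure inside each $S_i$ (resp.\ $S_{i,a}$) can be exploited while the sizes $n_i$ (resp.\ $n_{i,a}$) themselves are treated as fixed. All of this follows the template of \citep{cao2019learning,kakade2008complexity}, so I would cite it rather than reprove it; the genuinely new content is just the decomposition in the first step that converts the equalized-odds violation into a sum of subgroup-conditional errors.
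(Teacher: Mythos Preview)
Your proposal is correct and follows essentially the same approach as the paper: decompose $\cM[f]$ by bounding the fairness term via $|\cL_{i,a_1}[f]-\cL_{i,a_2}[f]|\le \cL_{i,a_1}[f]+\cL_{i,a_2}[f]$, then apply the standard margin-based generalization bound (the paper packages this as Lemma~A.1, citing \citep{kakade2008complexity}) to each class and each subgroup with the empirical margin loss equal to zero under the well-separation assumption. Your write-up is in fact more careful than the paper's brief argument --- you spell out the union bound over the constant number of groups, the conditioning on realized subgroup sizes, and the monotonicity step $\gamma_{i,a}\ge\gamma_i$ for the second inequality --- but the route is the same.
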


This following lemma is the key lemma we will use. Let us define the empirical Rademacher complexity of $\cF$ of subgroup/class margin on $S_*$ as
$$\hat{\cR}_i(\cF)=\frac{1}{n_i}\bE_{\xi}[\sup_{f\in\cF}\sum_{j\in S_i}\xi_j[f(x_j)_i-\max_{i'\neq i}f(x_j)_{i'}]],$$

$$\hat{\cR}_{i,a}(\cF)=\frac{1}{n_{i,a}}\bE_{\xi}[\sup_{f\in\cF}\sum_{j\in S_{i,a}}\xi_j[f(x_j)_i-\max_{i'\neq i}f(x_j)_{i'}]],$$
where $\xi_j$ is i.i.d. drawn from a uniform distribution $\{-1,1\}$.

\begin{lemma}\label{lm:complete}
Let $\hat{\cL}_{\gamma,i}[f]=\bP_{x\sim \hat{\cP}_i}(\max_{j\neq i}f(x)_j>f(x)_i-\gamma)$ and $\hat{\cL}_{\gamma,(i,a)}[f]=\bP_{x\sim \hat{\cP}_{i,a}}(\max_{j\neq i}f(x)_j>f(x)_i-\gamma)$. With probability at least $1-\delta$ over the the randomness of the training data, for some proper complexity measure of class $\cF$, for any $f\in\cF$, $*\in\{i,(i,a)|i\in \cY,a\in\cA\}$, and all margins $\gamma>0$
\begin{equation}\label{eq:M}
\cL_*[f]\lesssim \hat{\cL}_{\gamma,*}[f]+ \frac{1}{\gamma}\hat{\cR}_*(\cF)+\epsilon_*(n_*,\delta,\gamma_*),
\end{equation}
where $\hat{\cR}_*(\cF)$ is the empirical Rademacher complexity of $\cF$ of subgroup/class margin on training dataset corresponding to index set $S_{*}$, which can be further upper bnounded by $\sqrt{\frac{C(\cF)}{n_*}}$. Also, $\epsilon_*(n_*,\delta,\gamma_*)$ is usually a low-order term in $n_*$
\end{lemma}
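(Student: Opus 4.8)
The plan is to prove the displayed inequality for a single fixed subpopulation index $*\in\{i,(i,a):i\in\cY,a\in\cA\}$ and then union-bound over the constantly many choices of $*$ (adjusting $\delta$ by a constant factor). Since the $n_*$ samples carrying index $*$ are i.i.d.\ from $\cP_*$, it suffices to establish, for an i.i.d.\ sample of size $n_*$, a uniform-in-$f$ margin generalization bound. Following the classical margin-based recipe of \citep{kakade2008complexity,cao2019learning}, I would introduce the ramp surrogate $\phi_\gamma(t)=\min\{1,\max\{0,1-t/\gamma\}\}$, which is $\tfrac1\gamma$-Lipschitz and satisfies $\mathbf{1}\{t\le 0\}\le\phi_\gamma(t)\le\mathbf{1}\{t\le\gamma\}$. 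Applying $\phi_\gamma$ to the margin functional $\gamma(x,i)=f(x)_i-\max_{l\neq i}f(x)_l$ and using the two-sided sandwich gives $\cL_*[f]\le\bE_{x\sim\cP_*}\phi_\gamma(\gamma(x,i))$ and $\hat\bE_{x\sim\hat\cP_*}\phi_\gamma(\gamma(x,i))\le\hat\cL_{\gamma,*}[f]$, so everything reduces to bounding the population$-$empirical gap of $\phi_\gamma\circ\gamma(\cdot,i)$ uniformly over $f\in\cF$.

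Second, I would invoke the data-dependent Rademacher uniform-deviation bound: with probability at least $1-\delta$, simultaneously for all $f\in\cF$,
\begin{equation*}
\bE_{\cP_*}\phi_\gamma(\gamma(x,i))-\hat\bE_{\hat\cP_*}\phi_\gamma(\gamma(x,i))\le 2\,\cR_{n_*}\!\big(\phi_\gamma\circ\cG_*\big)+O\!\Big(\sqrt{\tfrac{\log(1/\delta)}{n_*}}\Big),
\end{equation*}
where $\cG_*=\{x\mapsto f(x)_i-\max_{l\neq i}f(x)_l:f\in\cF\}$ is the margin-function class and $\cR_{n_*}$ its empirical Rademacher complexity on $S_*$ (one extra application of McDiarmid replaces the expected by the empirical complexity at an $O(\sqrt{\log(1/\delta)/n_*})$ cost). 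Since $\phi_\gamma$ is $\tfrac1\gamma$-Lipschitz, the Ledoux--Talagrand contraction lemma (after subtracting the irrelevant constant $\phi_\gamma(0)$, which does not change a Rademacher complexity) yields $\cR_{n_*}(\phi_\gamma\circ\cG_*)\le\tfrac1\gamma\hat\cR_*(\cF)$ with $\hat\cR_*(\cF)$ exactly as defined in the statement. This proves the claim for each fixed $\gamma$.

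Third, to upgrade the bound to hold \emph{simultaneously for all margins} $\gamma>0$, I would run the standard peeling/doubling argument: apply the fixed-$\gamma$ bound on the geometric grid $\{2^{-k}:k\ge 1\}$ with failure budget $\delta 2^{-k}$ per grid point; for an arbitrary $\gamma$ choose the grid point $\gamma'\in(\gamma/2,\gamma]$, so the fixed-$\gamma'$ bound gives $\cL_*[f]\le\hat\cL_{\gamma',*}[f]+\tfrac1{\gamma'}\hat\cR_*(\cF)+\cdots$, and then bound $\hat\cL_{\gamma',*}[f]\le\hat\cL_{\gamma,*}[f]$ (monotonicity of the margin loss in $\gamma$) and $1/\gamma'<2/\gamma$, transferring everything to $\gamma$ at the price of an extra additive term of order $\sqrt{\log\log_2(2/\gamma)/n_*}$. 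All the $O(\sqrt{\log(1/\delta)/n_*})$ and $\sqrt{\log\log(1/\gamma)/n_*}$ contributions are collected into $\epsilon_*(n_*,\delta,\gamma_*)$, which is then low-order in $n_*$. Finally, $\hat\cR_*(\cF)\le\sqrt{C(\cF)/n_*}$ follows from the appropriate complexity control for $\cF$ — norm/covering-number bounds for linear predictors as in \citep{kakade2008complexity}, or the spectral-norm analogues for the relevant over-parameterized class — with $C(\cF)$ absorbing the complexity parameters.

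The only delicate points are (i) passing from $\cF$ to the margin class $\cG_*$, i.e.\ absorbing the inner $\max_{l\neq i}$, which is already built into the definition of $\hat\cR_*(\cF)$ in the statement and so needs nothing beyond the contraction step above, and (ii) making sure the union-over-$\gamma$ terms genuinely remain lower-order than $\tfrac1\gamma\hat\cR_*(\cF)$; both are routine given \citep{cao2019learning,kakade2008complexity}. A last union bound over the finitely many indices $*$ then delivers the stated inequality for all $*$ at once, which is what is used downstream in Theorem~\ref{thm:appeomargin}.
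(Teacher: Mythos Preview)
Your proposal is correct and is precisely the standard margin-based argument the paper invokes: the paper's own proof of this lemma is the single sentence ``This is a direct application of the standard margin-based generalization bound in \citep{kakade2008complexity},'' and your ramp-loss sandwich, Rademacher deviation bound, Ledoux--Talagrand contraction, and peeling over $\gamma$ are exactly how that standard bound is established. In other words you have spelled out in full what the paper leaves as a citation, so the approaches coincide.
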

\begin{proof}
This is a direct application of the standard margin-based generalization bound in \citep{kakade2008complexity}.
\end{proof}

\paragraph{Proof of Theorem~\ref{thm:appeomargin}.} Notice that all the training samples are classified perfectly by $h$, not only $\bP_{(x,y)\sim\hat \cP_{\bal}}(h(x)\neq y)=0$ is satisfied, we also have that $\bP_{(x,y)\sim\hat{\cP}_{i,a_j}}(h(x)\neq y)=0$ for all $i\in\cY$ and $a_j\in\cA$. We remark here that $\bP(h(X)=i|Y=i,A=a)=1-\bP_{(x,y)\sim\cP_{i,a}}(h(x)\neq y)=1-\bP(f(x)_y<\max_{j\neq y}f(x)_j)$.
\begin{align*}
&\Lbal[f]+\alpha \sum_{i\in\cY}\left|\bP(h(X)=i|Y=i,A=a_1)-\bP(h(X)=i|Y=i,A=a_2)\right|\\
&\le \Lbal[f]+\alpha (\bP_{(x,y)\sim\cP_{i,a_1}}(h(x)\neq y)+\bP_{(x,y)\sim\cP_{i,a_2}}(h(x)\neq y)).
\end{align*}
Then, plug in Lemma~\ref{lm:complete}, the proof is complete.

\subsubsection{Optimization of $\gamma_0/\gamma_1$}

\begin{theorem}
For binary classification, let $\cF$ be a class of neural networks with a bias term, i.e. $\cF=\{f+b\}$ where $f$ is a neural net function and $b\in\bR^2$ is a bas, with Rademacher complexity upper bound $\hat{\cR}_*(\cF)\le\sqrt{\frac{C(\cF)}{n_*}}$. Suppose some classifier $f\in\cF$ can achieve a total sum of margins $\gamma'_0+\gamma'_1=\beta$ with $\gamma'_0,\gamma'_1>0.$ Then, there exists a classifier $f^*\in\cF$ with margin ratio
$$\gamma^*_0/\gamma^*_1=\tilde{n}_i^{-1/4}/\tilde n_j^{-1/4},$$
where the adjusted sample size $\tilde n_i=\frac{n_i\Pi_{a}n_{i,a}}{(\sqrt{\Pi_{a}n_{i,a}}+\alpha\sum_{j\in\cA}\sqrt{n_i\Pi_{a\in\cA^{\backslash j}}n_{i,a}})^2}$
for $i\in \cY$.
\end{theorem}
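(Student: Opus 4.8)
The plan is to reduce the claim to a one-variable optimization over a single margin, then translate the optimal margin back into a statement about the existence of an element of $\cF$ achieving the desired ratio. First I would fix the total budget $\gamma_0 + \gamma_1 = \beta$ and, using the class structure $\cF = \{f + b\}$, observe that shifting the bias $b$ in the relevant direction trades margin between the two classes continuously: if some $f'\in\cF$ realizes margins $(\gamma_0', \gamma_1')$ with $\gamma_0' + \gamma_1' = \beta$, then for every $\delta\in[-\gamma_1', \gamma_0']$ there is an $f\in\cF$ realizing $(\gamma_0' - \delta, \gamma_1' + \delta)$ (the bias term absorbs the translation, and the sum of the two class margins is preserved because a translation along the separating direction increases one margin by exactly the amount it decreases the other). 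So the feasible set of margin pairs is exactly the segment $\{(s, \beta - s): 0 < s < \beta\}$.

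Next I would minimize the generalization surrogate from Theorem~\ref{thm:appeomargin} over this segment. Collecting the terms that depend on the class margins, the upper bound has the form $g(\gamma_0,\gamma_1) = \sum_{i\in\cY}\frac{1}{\gamma_i}\sqrt{\tfrac{C(\cF)}{n_i}} + \alpha\sum_{i\in\cY, a\in\cA}\frac{1}{\gamma_i}\sqrt{\tfrac{C(\cF)}{n_{i,a}}}$, i.e. $g(\gamma_0,\gamma_1) = \frac{\kappa_0}{\gamma_0} + \frac{\kappa_1}{\gamma_1}$ where $\kappa_i = \sqrt{C(\cF)}\bigl(\tfrac{1}{\sqrt{n_i}} + \alpha\sum_{a}\tfrac{1}{\sqrt{n_{i,a}}}\bigr)$. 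Substituting $\gamma_1 = \beta - \gamma_0$ and differentiating, the minimizer satisfies $\kappa_0/\gamma_0^2 = \kappa_1/\gamma_1^2$, hence $\gamma_0/\gamma_1 = \sqrt{\kappa_0}/\sqrt{\kappa_1} = \kappa_0^{1/2}\kappa_1^{-1/2}$. It then remains a matter of algebra to verify that $\kappa_i^{1/2}$ is proportional to $\tilde n_i^{-1/4}$ for the stated $\tilde n_i$: indeed $\kappa_i \propto \tfrac{1}{\sqrt{n_i}} + \alpha\sum_a \tfrac{1}{\sqrt{n_{i,a}}} = \frac{\sqrt{\Pi_a n_{i,a}} + \alpha\sum_{j\in\cA}\sqrt{n_i\,\Pi_{a\in\cA\setminus j} n_{i,a}}}{\sqrt{n_i\,\Pi_a n_{i,a}}}$, so $\kappa_i \propto \tilde n_i^{-1/2}$ and $\sqrt{\kappa_i}\propto\tilde n_i^{-1/4}$, giving exactly $\gamma_0^*/\gamma_1^* = \tilde n_0^{-1/4}/\tilde n_1^{-1/4}$.

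Finally, since the minimizing pair $(\gamma_0^*, \gamma_1^*)$ lies in the open segment $(0,\beta)$ (both $\kappa_i > 0$), it is attained by some $f^* \in \cF$ by the continuity/reachability argument of the first step, which completes the proof. I would also note that the same computation works classwise for the multi-class, multi-group extension by applying the argument pairwise to any two classes $i, j$, since the budget-splitting argument only ever involves two classes at a time.

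The step I expect to be the main obstacle is the first one: carefully justifying that translating the bias genuinely implements a margin trade-off with the sum $\gamma_0 + \gamma_1$ exactly conserved. This requires that the two classes sit on opposite sides of the decision boundary and that the "closest point" in each class (the one realizing the margin) moves linearly with the translation — which is clean for linearly separable data and a bias-augmented class, but needs the hypothesis $\gamma_i' > 0$ (perfect separation) and an argument that no other sample overtakes the margin-realizing one within the relevant range of $\delta$. If one only wants an inequality $g(\gamma_0^*,\gamma_1^*) \le g(\gamma_0',\gamma_1')$ rather than exact attainment, this can be weakened, but the stated theorem asserts existence of $f^*$ with the exact ratio, so the reachability of the whole segment is what must be nailed down.
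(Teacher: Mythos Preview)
Your proposal is correct and follows essentially the same route as the paper: the paper's proof defers the bias-shift reachability argument to Theorem~3 of \citep{cao2019learning} and then states that the only new ingredient is solving $\min_{\gamma_0+\gamma_1=\beta}\sum_{i}\frac{1}{\gamma_i\sqrt{n_i}}+\alpha\sum_{i,a}\frac{1}{\gamma_i\sqrt{n_{i,a}}}$, which is exactly the optimization you carry out (your $\kappa_i$ computation and the verification $\kappa_i\propto\tilde n_i^{-1/2}$ are the omitted algebra). The concern you flag about margin conservation under bias translation is precisely what the cited Theorem~3 handles, so there is no gap.
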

\begin{proof}
This can directly follow simliar proof in Theorem 3 in \citep{cao2019learning}. The only difference is that we need to solve 
$$\min_{\gamma_0+\gamma_1=\beta}\sum_{i\in\cY}\frac{1}{\gamma_i}\sqrt{\frac{1}{n_i}}+ \sum_{i\in\cY,a\in\cA}\frac{\alpha}{\gamma_{i}}\sqrt{\frac{1}{n_{i,a}}},$$
which yields the final result.
\end{proof}

\subsubsection{Proof of Example~\ref{ex:example}.}

\begin{example}[Restatement of Example~\ref{ex:example}]
Given function $f$ and set $S$, let $$\dist(f,S)=\min_{x,s\in S}\|f(x)-s\|_2.$$ Consider two classifiers $\tilde{f}, f \in \cF$ such that $$\dist(\tilde f, S_0)/\dist(\tilde f, S_1)=\tilde{n}_0^{-1/4}/\tilde n_1^{-1/4}$$ and $\dist(f', S_0)/\dist(f', S_1)=n_0^{-1/4}/ n_1^{-1/4}$. Suppose  $\|\beta^*\|\gg \sqrt{p\log n}$, $\|\mu_i\|<C$, $(\mu_1^*-\mu_2^*)^\top\beta=0$,  
and $\pi_{1,a_2}\le c_1 \pi_{1,a_1}$ for a sufficiently small $c_1>0$, then when $n_0, n_1$ are sufficiently large, with high probability we have $\cM[\tilde f] < \cM[f].$
\end{example}

\begin{proof} Recall that $\tilde n_i=\frac{n_i\Pi_{a\in\cA}n_{i,a}}{(\sqrt{\Pi_{a\in\cA}n_{i,a}}+\alpha\sum_{a\in\cA}\sqrt{n_in_{i,a}})^2}$ for $i\in \{0,1\}$, and our training data follow distribution: $x\mid y=0 \sim \sum_{i=1}^{2}\pi_{0,a_i} \cN_p(\mu_i, I),~~x\mid y=1 \sim \sum_{i=1}^{2}\pi_{1,a_i} \cN_p(\mu_i+\beta^*, I).$ 

\begin{align*}
 \cM[f]=&\frac{1}{2}\bP(h(X)=1|Y=0)+\frac{1}{2}\bP(h(X)=0|Y=1)\\
 &+\alpha \sum_{i\in\cY}\left|\bP(h(X)=i|Y=i,A=a_1)-\bP(h(X)=i|Y=i,A=a_2)\right|\\
 =&\frac{1}{2}\sum_{i=1}^{2}\pi_{0,a_i}\Phi(\frac{\beta^{*\top}\mu_i-c}{\|\beta^*\|})+\frac{1}{2}\sum_{i=1}^{2}\pi_{1,a_i}\Phi(\frac{c-\beta^{*\top}\mu_i-\|\beta^*\|^2}{\|\beta^*\|})\\
 &+\alpha\cdot|\Phi(\frac{\beta^{*\top}\mu_0-c}{\|\beta^*\|})-\Phi(\frac{\beta^{*\top}\mu_1-c}{\|\beta^*\|})|+\alpha\cdot|\Phi(\frac{\beta^{*\top}\mu_0+\|\beta^*\|^2-c}{\|\beta^*\|})-\Phi(\frac{\beta^{*\top}\mu_1+\|\beta^*\|^2-c}{\|\beta^*\|})|
\end{align*}

For different margin ratio $\gamma$, we have $c=\mu_1^\top\beta^*+\frac{1}{1+\gamma}\|\beta^*\|^2+O_P(\sqrt{p\log n})$, where the $O_P(\sqrt{p\log n})$ term accounts for the variation of random samples and is based on the fact that $\|Z\|^2\sim\chi_p^2$ if $Z\sim \mathcal N_p(0,I)$ and $\max Z_i=O_P(p\log n)$ if $Z_1,...,Z_n\stackrel{i.i.d.}{\sim}\chi_p^2$.

Using the fact that $\|\beta^*\|\gg \sqrt{p\log n}$ and $\|\mu_i\|<C$, we then have $
c=(\frac{1}{1+\gamma}+o_P(1))\|\beta^*\|^2.
$

Similarly, we have  $$
  \cM[f]=\Phi(-\frac{c}{\|\beta^*\|})+\Phi(\frac{c-\|\beta^*\|^2}{\|\beta^*\|})+ o(1)
 $$

Then let's consider the two ratios $\gamma=n_0^{-1/4}/n_1^{-1/4}$ and $\tilde\gamma=\tilde n_0^{-1/4}/\tilde n_1^{-1/4}$.

In the following we compute $\tilde n_0$ and $\tilde n_1$:

When
$\pi_{1,a_2}\le c_1 \pi_{1,a_1}$
$\tilde n_1=\frac{n_1\Pi_{a\in\cA}n_{1,a}}{(\sqrt{\Pi_{a\in\cA}n_{1,a}}+\alpha\sum_{a\in\cA}\sqrt{n_1n_{1,a}})^2}\in (0.9 n_1, n_1)$ 

When
$\pi_{0,a_2}= \pi_{0,a_1}$
$\tilde n_0=\frac{n_0\Pi_{a\in\cA}n_{0,a}}{(\sqrt{\Pi_{a\in\cA}n_{0,a}}+\alpha\sum_{a\in\cA}\sqrt{n_0n_{0,a}})^2}=(\frac{1}{1+2\sqrt{2}})^2 n_0.$

 As a result, we have $\frac{\tilde \gamma}{\gamma}\in[1.9,2]$. When the data is imbalanced such that $\gamma=(\frac{n_1}{n_0})^{1/4}>1$, we have $0<\frac{1}{1+\tilde\gamma}<\frac{1}{1+\gamma}<1/2$, and consequently
$$
\Phi(-\frac{\frac{1}{1+\tilde\gamma}\|\beta^*\|^2}{\|\beta^*\|})+\Phi(\frac{\frac{1}{1+\tilde\gamma}\|\beta^*\|^2-\|\beta^*\|^2}{\|\beta^*\|})<\Phi(-\frac{\frac{1}{1+\tilde\gamma}\|\beta^*\|^2}{\|\beta^*\|})+\Phi(\frac{\frac{1}{1+\tilde\gamma}\|\beta^*\|^2-\|\beta^*\|^2}{\|\beta^*\|}).
$$
Additionally, we have that when $\|\beta^*\|\to\infty$, the second term in the $ \cM[f]$, the fairness violation error is $o(1)$. 
Combining all the pieces, we have $\cM[\tilde f] < \cM[f].$

\end{proof}
\subsubsection{Proof of Theorem~\ref{thm:convergence}.}
\begin{theorem}[Restatement of Theorem~\ref{thm:convergence}]
Let $\rho=\max_f\|M\hat{\mu}^{\text{new}}(f)-\hat c\|_\infty$. For $\eta=\nu/(2\rho^2B)$, the modified \textbf{ExpGrad} will return a $\nu$-approximate saddle point of $L^{\text{new}}$ in at most $4\rho^2B^2\log(|\cK|+1)/\nu^2$ iterations.
\end{theorem}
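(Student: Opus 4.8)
The plan is to observe that the modified optimization retains exactly the structure of the reductions problem of \citep{agarwal2018reductions}, so its convergence analysis carries over with only the quantities renamed. Concretely, the Lagrangian $L^{\text{new}}(Q,\lambda)=\err^{\text{new}}(Q)+\lambda^\top(M\hat\mu^{\text{new}}(Q)-\hat c)$ is affine in $Q$ over the simplex $\Delta_\cH$ and affine in $\lambda$ over the truncated cone $\{\lambda\in\bR^{|\cK|}_+:\|\lambda\|_1\le B\}$, because $\err^{\text{new}}(Q)=\sum_h Q(h)\err^{\text{new}}(h)$ and $\hat\mu^{\text{new}}(Q)=\sum_h Q(h)\hat\mu^{\text{new}}(h)$ are both linear in $Q$ — the sole change from the unmodified case being that the per-classifier quantities $\err^{\text{new}}(h)$ and $\hat\mu^{\text{new}}(h)$ are read off the margin-shifted logits $f(x)_y-\Delta_{y,a}$ rather than the raw ones. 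In particular $\rho=\max_f\|M\hat\mu^{\text{new}}(f)-\hat c\|_\infty$ is well defined, and by linearity in $Q$ one also has $\|M\hat\mu^{\text{new}}(Q)-\hat c\|_\infty\le\rho$ for every $Q\in\Delta_\cH$, since the maximum of an affine function over a simplex is attained at a vertex.

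First I would recall the scheme of \textbf{ExpGrad}: it runs the exponentiated-gradient (Hedge / multiplicative-weights) update on the dual variable $\lambda$, viewed as a point of the $B$-scaled $(|\cK|+1)$-dimensional simplex (the extra coordinate absorbing the slack in $\|\lambda\|_1\le B$), while at each round the primal player returns a best response $\hat h_t\in\argmin_{h\in\cH}\big(\err^{\text{new}}(h)+\lambda_t^\top M\hat\mu^{\text{new}}(h)\big)$. By affinity of $L^{\text{new}}(\cdot,\lambda_t)$ in $Q$ this best response is a vertex of $\Delta_\cH$, and it is precisely the cost-sensitive (relabeled, reweighted) classification call implemented with the FIFA loss $\lfifa$ in Algorithm~\ref{algo:fifa}; hence the primal player incurs zero regret.

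Next I would bound the dual player's regret by the standard Hedge guarantee: since each per-round gain vector $M\hat\mu^{\text{new}}(Q_t)-\hat c$ has $\ell_\infty$-norm at most $\rho$, after $T$ rounds with step size $\eta$ the average dual regret is $O\!\big(\tfrac{B\log(|\cK|+1)}{\eta T}+\eta B\rho^2\big)$; combining the zero primal regret with this via the Freund--Schapire minimax argument shows that the averaged play $(\bar Q,\bar\lambda)$ is a $\nu$-approximate saddle point of $L^{\text{new}}$ for $\nu$ equal to (a constant times) that regret. Choosing $\eta=\nu/(2\rho^2 B)$ balances the two terms and, after demanding each be at most $\nu/2$, forces $T\ge 4\rho^2 B^2\log(|\cK|+1)/\nu^2$, which is the claimed iteration bound.

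The one point requiring verification — and it is the crux, though a mild one — is that swapping $(\err,\hat\mu)$ for $(\err^{\text{new}},\hat\mu^{\text{new}})$ disturbs none of these ingredients: affinity in $Q$ (hence vertex solutions and the validity of the best-response oracle), and the finiteness of $\rho$, which is exactly the quantity entering the rate. Both are immediate because the margin shift $\Delta_{Y,A}$ is a fixed, bounded, $f$-independent offset applied inside the logit comparison. I do not expect a serious obstacle: the statement is a direct specialization of the \textbf{ExpGrad} analysis of \citep{agarwal2018reductions} once this structural equivalence is noted, and the only care needed is in tracking the constants and the $B$-scaling of the dual simplex in the Hedge regret bound, which I would handle by citing the corresponding lemma there rather than re-deriving it.
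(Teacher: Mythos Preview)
Your proposal is correct and lands in the same place as the paper, but the paper takes a slightly slicker route worth noting. Rather than re-verifying that affinity in $(Q,\lambda)$ and the Hedge regret bound survive the swap to $(\err^{\text{new}},\hat\mu^{\text{new}})$, the paper simply extends the domain of the classifier, viewing $h$ as a map $(x,y,a)\mapsto\{0,1\}$; since the margin shift $\Delta_{y,a}$ depends only on $(y,a)$, the modified predictor $\bm{1}\{f(x)_y-\Delta_{y,a}>f(x)_{1-y}\}$ is then literally an element of a hypothesis class on this enlarged input space, and $\hat\mu^{\text{new}}$ is literally an instance of the original $\hat\mu$ for that class. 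This makes the reduction to Theorem~1 of \citep{agarwal2018reductions} a one-line citation rather than a structural verification. Your approach is equally valid and has the virtue of making the mechanics (zero primal regret, $\rho$-bounded dual gains, Freund--Schapire averaging) explicit, whereas the paper's domain-extension trick buys brevity at the cost of hiding those details behind the citation.
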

\begin{proof}
We consider an extended version of $h$ in \citep{agarwal2018reductions}, which is a function of (x,y,a) instead of just be a function of $x$. $h: (x,y,a)\mapsto \{0,1\}$. Notice that $\mu(h)$ also satisfies the requirement in \citep{agarwal2018reductions} with the extend version $h$. Thus, directly by classic result of in Freund \& Scapire (1996) and theorm $1$ in \citep{agarwal2018reductions}, the result follows.
\end{proof}

\subsection{Combination with other algorithms}
 As we stated, the algorithm stated in the main context is just one of the examples that can be combined with our approach. FIFA can also be applied to many other popular algorithms such as fair representation \citep{madras2018learning}. We here show how to combine with fair representation.

In \citep{madras2018learning}, there are several parts, an encoder $\rho$, an adversary $v$, a decoder $k$ and a predictor $g$. The optimization is:

$$\min_{g,\rho,k}\max_v \bE_{X,Y,A} L(g,\rho,k,v),$$
where 
\begin{align*}
L(g,\rho,k,v)=\lambda_1 \ell_c(g(\rho(X)),Y)+\lambda_2\ell_{\text{dec}}(k(\rho(X),A),X)+\lambda_3\ell_{\text{adv}}(v(\rho(X),A)),
\end{align*}
for cross entropy loss $\ell_c$, decoding loss $\ell_{\text{dec}}$, and adversary loss $\ell_{\text{adv}}$. We can modify the cross entropy loss to $\ell_c$ to $\ell_{\text{FIFA}}$. So, 
\begin{align*}
L_{\text{FIFA}}(g,\rho,k,v)=\lambda_1 \ell_{\text{FIFA}}(g(\rho(X)),Y)+\lambda_2\ell_{\text{dec}}(k(\rho(X),A),X)+\lambda_3\ell_{\text{adv}}(v(\rho(X),A)).
\end{align*}

Actually, for $\ell_{\text{adv}}$, we can similarly modify for indices, but it is a little complicated and notation heavy, so we omit it here.

\section{Implementation details}

\begin{table*}[h]
    \scriptsize\centering
    \resizebox{\textwidth}{!}{%
    \begin{tabular}{lrrrrrrrrrrrr}
\toprule
{\bf Data} & \multicolumn{4}{c}{\bf AdultIncome} & \multicolumn{4}{c}{\bf CelebA} & \multicolumn{4}{c}{\bf DutchConsensus} \\
{\bf Label} & \multicolumn{2}{c}{\texttt{-}} & \multicolumn{2}{c}{\texttt{+}} & \multicolumn{2}{c}{\texttt{-}} & \multicolumn{2}{c}{\texttt{+}} & \multicolumn{2}{c}{\texttt{-}} & \multicolumn{2}{c}{\texttt{+}} \\
{\bf Gender} &      Female &   Male & Female &  Male & Female &   Male & Female &  Male &         Female &   Male & Female &   Male \\
\midrule
Train &        9592 &  15128 &   1179 &  6662 &  71629 &  66874 &  22880 &  1387 &          69117 &  59608 &   7991 &  15064 \\
Test  &        4831 &   7604 &    590 &  3256 &   9767 &   7535 &   2480 &   180 &          17231 &  15006 &   1912 &   3796 \\
\bottomrule
\end{tabular}
    }
\caption{Training and testing sample sizes for CelebA and AdultIncome datasets across labels (Label) and sensitive attributes (Gender). 
    }
    \label{tab:datasummary}
\end{table*}
    We use the official train-test split for the CelebA dataset. For AdultIncome and DutchConsensus,
    we use the
    \texttt{train\_test\_split} procedure of the \texttt{scikit-learn} package with training-test
    set ratio of $0.8$ and random seed of \texttt{1}
    to generate the training and test set.
    We tabulate the sizes for subgroups in Table~\ref{tab:datasummary}.
    
\paragraph{Details on CelebA.}
    We use the same pre-processing steps
    as in \citep{sagawa2020investigation} to
    crop the images in CelebA into $224\times 224\times 3$ and perform the same $z$-normalization for both training and test set. We use ResNet-18 models
    for training with the last layer being replaced to the \texttt{NormLinear} layer used by \citep{cao2019learning} that ensures the input as well as the columns of the wight matrix (with $2$ rows corresponding to each label class) has norm $1$. This ensures our adjustments on the logits are comparable. We use the Adam optimizer with learning rate $1\times 10^{-4}$ and weight decay $5\times 10^{-5}$ to train these models with stochastic batches of sizes $128$. We performed pilot experiments and learnt that under this configuration the models usually converges within the first $1500$ iterations in terms of training losses and thus we fix the training time as $8000$ iterations which corresponds to roughly four epochs.

\paragraph{AdultIncome and DutchConsensus.}
    AdultIncome and DutchConsensus are two relatively
    smaller datasets that have been used for benchmarking for various fair classification algorithms
    such as \citep{agarwal2018reductions}. We convert
    all categorical variables to dummies and use the standard $z$-normalization to pre-process the data.
    There are $107$ features in AdultIncome and $59$
    in DutchConsensus, both counting the senstive attribute, gender. We intend to use these datasets
    to test smaller models such as logistic regression,
    and we implement it as a one-layer neural net
    for consistency concerns, which is trained using
    full-batch gradient descent using Adam with learning rate $1\times 10^{-4}$ and weight decay $5\times 10^{-5}$ for $10000$ epochs. All models
    converge after this training measured by the training
    metrics.
    Although the exact pre-processing
    procedures for these two datasets are not available
    in \citep{agarwal2018reductions}, we found that
    on vanilla models under both GridSearch and ExponentiatedGradient methods, the training and 
    test performance (measured by total accuracy and
    fairness violation) are comparable with those
    reported in \citep{agarwal2018reductions}.
    
\paragraph{Computational resource considerations.}
    We perform all experiments on \texttt{NVIDIA}
    GPUs \texttt{RTX 2080 Ti}. Each experiment
    on CelebA usually takes less than two hours (clocktime) and each experiment on AdultIncome and DutchConsensus takes less than ten minutes. 

\section{Additional experimental results}

\subsection{A trajectory analysis on CelebA}
\begin{figure*}[t]
    \centering
    \begin{subfigure}[t]{0.45\textwidth}
	    \centering
	    \includegraphics[width=\linewidth]{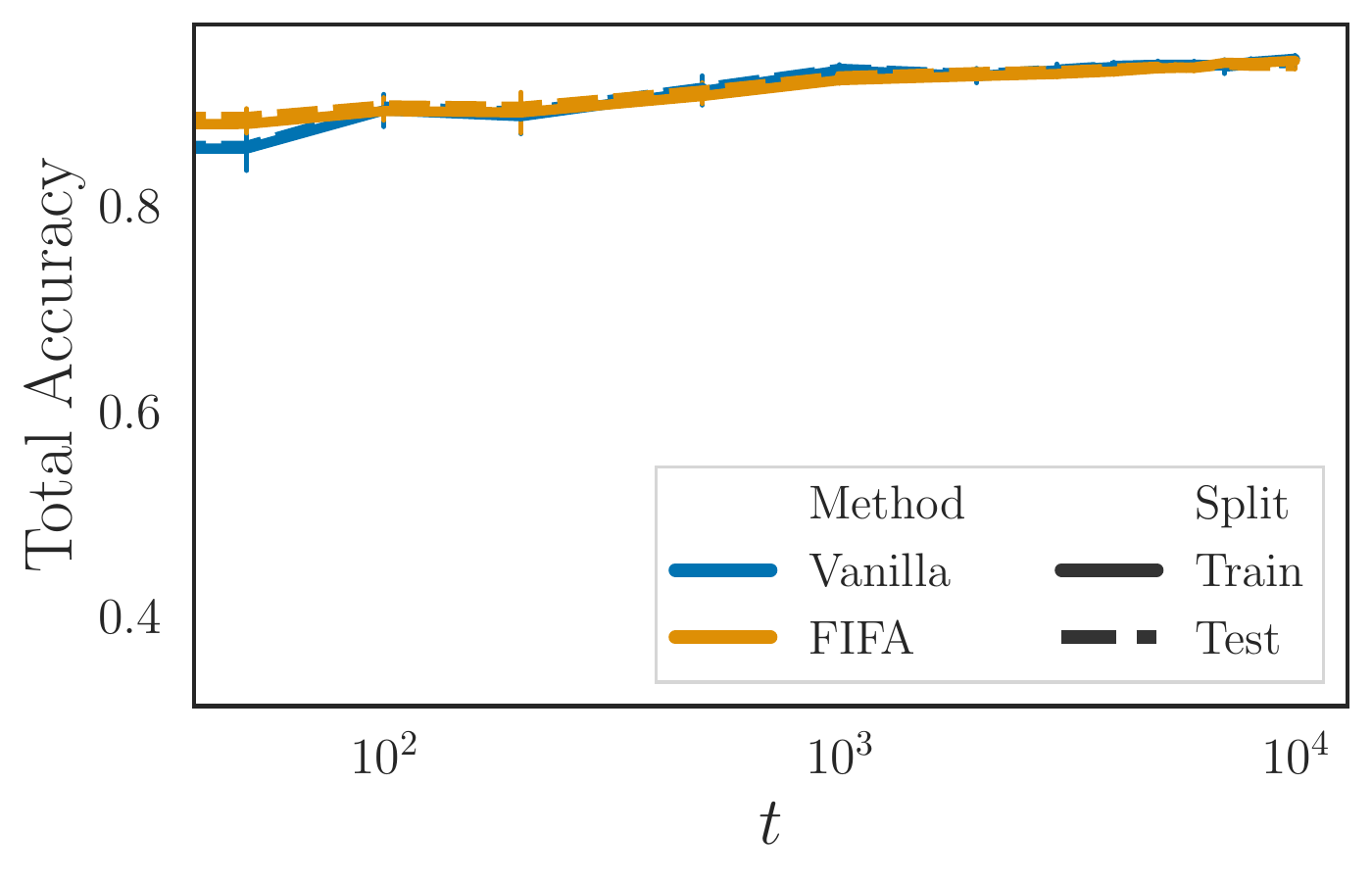}
	    \caption{Total accuracy.} \label{fig:celeba:traj:total}
	\end{subfigure}~
	\begin{subfigure}[t]{0.45\textwidth}
	    \centering
	    \includegraphics[width=\linewidth]{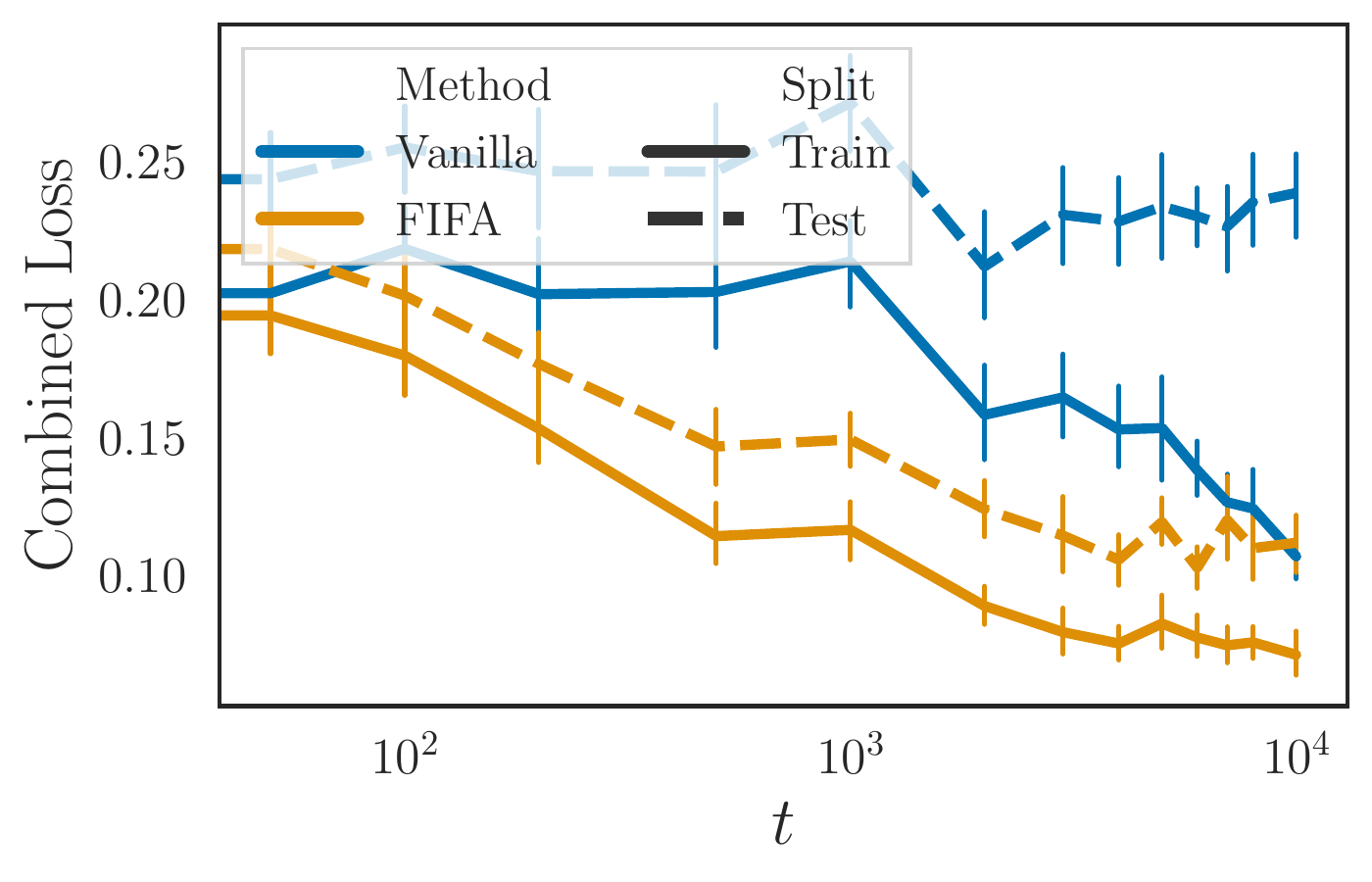}
	    \caption{Combined loss.} \label{fig:celeba:traj:comb}
	\end{subfigure} \\
    \begin{subfigure}[t]{0.45\textwidth}
	    \centering
	    \includegraphics[width=\linewidth]{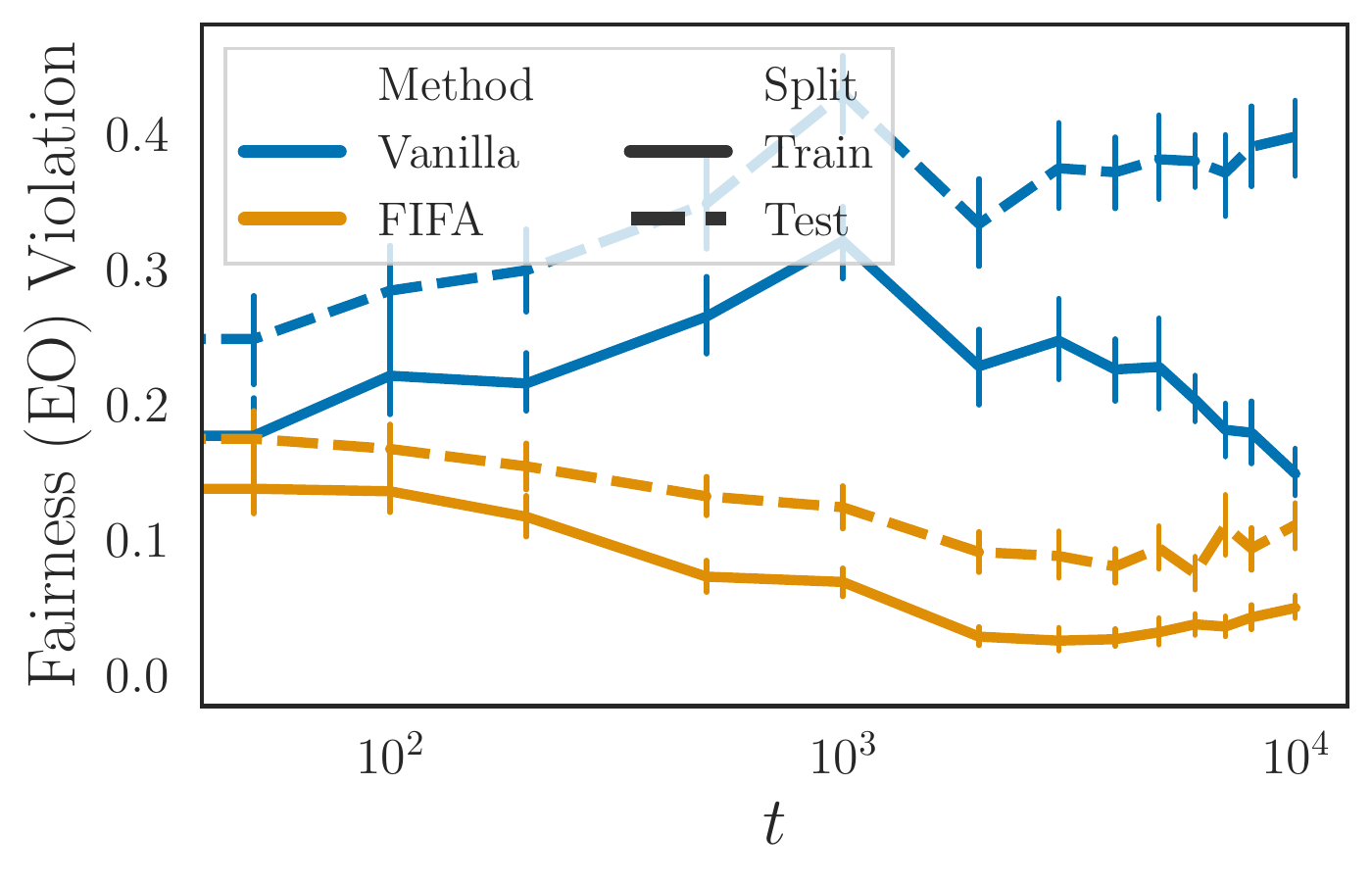}
	    \caption{Fairness violation.} \label{fig:celeba:traj:fv}
	\end{subfigure}~
    \begin{subfigure}[t]{0.45\textwidth}
	    \centering
	    \includegraphics[width=\linewidth]{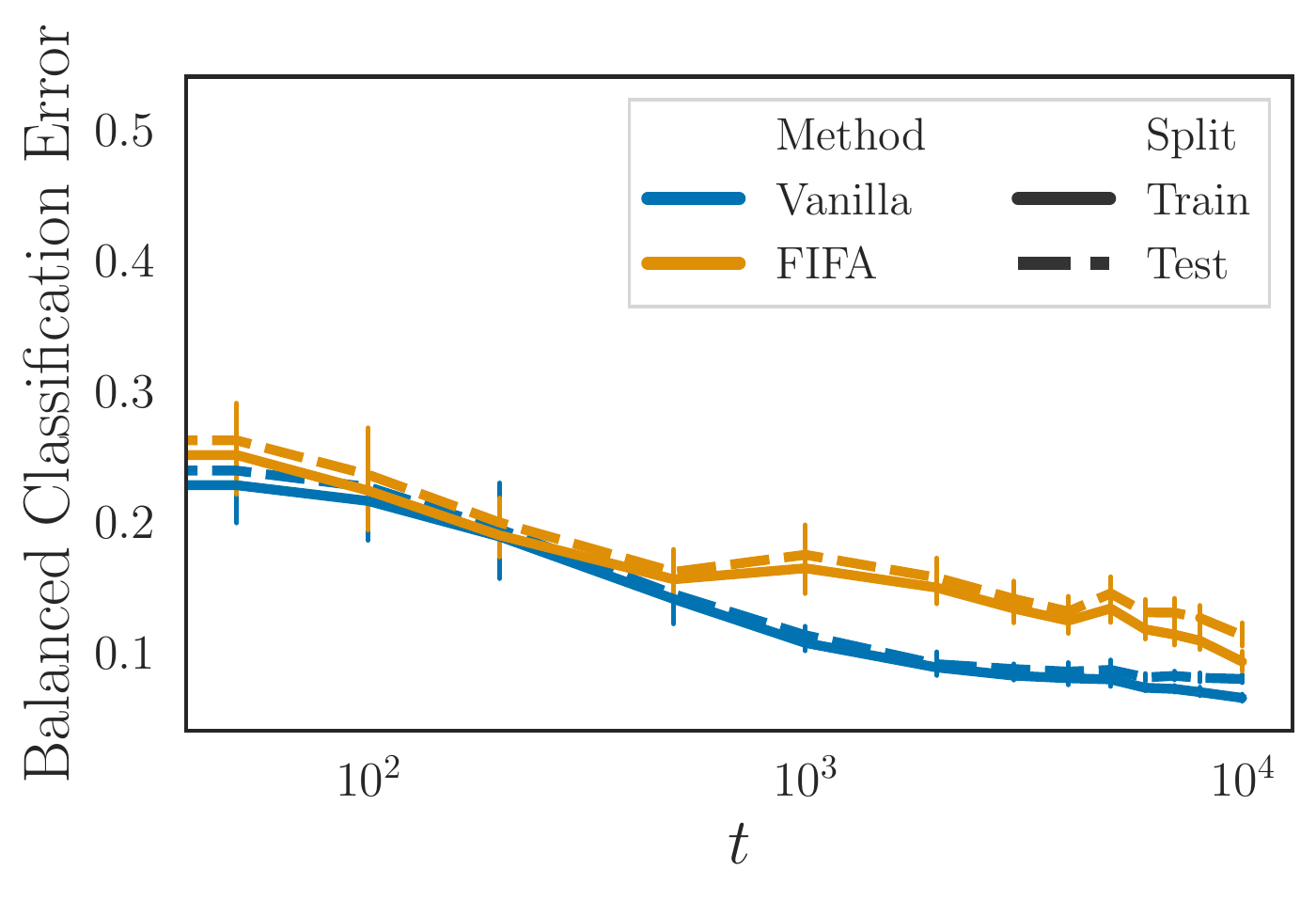}
	    \caption{Balanced error.} \label{fig:celeba:traj:bal}
	\end{subfigure}%
	\caption{Training and test trajectories of different metrics of ResNet-18 on CelebA dataset
	under FIFA and vanilla losses respectively. We note that the generalization performance of vanilla models
	are consistently poor as training time increases,
	suggesting that it is difficult to cultivate an
	early-stopping scheme that might alleviate poor
	fairness generalization.}
	\label{fig:celeba:traj}
	
\end{figure*}
One observation we made in Table 1 is that
the improvements of the generalization of combined loss on CelebA is largely due to the improved generalization
performance on fairness violations. It is natural to wonder whether this behavior suggests that the sweet
spots of generalization performance for balanced error
and fairness violation may not be aligned, i.e.,
there is a difference in training time scales for these two metrics
to reach their optimal generalization.
Furthermore, it is also open that whether one could enforce certain early stopping procedure (e.g., on the combined loss or the fairness violation) such that
the generalizations on vanilla models may be improved.

To explore these two questions, we plot
the trajectories of training and test metrics for FIFA
and vanilla (hyper-parameter chosen to be those corresponding to the best-performing models in Table 1) in Fig.~\ref{fig:celeba:traj}. We observe
that it is difficulty to (i) identify sweet spots of generalization gaps for the vanilla models; and (ii) enforce a reasonable early stopping criterion
that improves the generalization performances thereof.

\subsection{CelebA and the DP constraint}
We presented in Table 1 our main results, CelebA dataset
trained with grid search under EO constraint.
We show in Table~\ref{tab:celeba:dp} the
results on the DP constraints. Here all training configurations are the same as Table 1, except
that we replace the EO constraint by the DP constraint.
For ease of comparison, we also recall the results
on EO in Table~\ref{tab:celeba:dp}.
The observations are similar to those we made for
Table 1, namely, FIFA improves significantly on the combined loss compared with vanilla.

\begin{table*}[h]
    \scriptsize\centering
    \resizebox{\textwidth}{!}{%
\begin{tabular}{llllllllll}
\toprule
                      \multirow{3}{*}{$\epsilon$}       & \multirow{3}{*}{\bf Method}  & \multicolumn{4}{c}{\bf EO} & \multicolumn{4}{c}{\bf DP} \\
                      &         & \multicolumn{2}{c}{\bf Combined Loss} & \multicolumn{2}{c}{\bf Fairness Violation} & \multicolumn{2}{c}{\bf Combined Loss} & \multicolumn{2}{c}{\bf Fairness Violation} \\
                      &         &         Train &   Test &              Train &   Test &         Train &   Test &              Train &  Test \\
\midrule
\multirow{2}{*}{0.01} & FIFA &         7.37\% &  6.71\% &              5.31\% &  2.75\% &         8.65\% &  7.21\% &              4.84\% & 1.45\% \\
                      & Vanilla &         7.14\% & 14.01\% &              6.69\% & 20.29\% &        10.74\% & 10.43\% &              4.72\% & 1.35\% \\
\midrule
\multirow{2}{*}{0.05} & FIFA &         5.46\% &  6.34\% &              2.63\% &  3.29\% &        10.02\% &  9.40\% &              4.73\% & 1.07\% \\
                      & Vanilla &         8.84\% & 13.05\% &              9.45\% & 17.92\% &        12.14\% & 11.60\% &              8.34\% & 5.17\% \\
\midrule
\multirow{2}{*}{0.10} & FIFA &         5.92\% &  6.54\% &              3.11\% &  2.65\% &         9.04\% &  8.32\% &              2.98\% & 0.04\% \\
                      & Vanilla &         8.90\% & 16.71\% &             11.37\% & 26.15\% &        12.13\% & 11.66\% &              9.77\% & 6.83\% \\
\bottomrule
\end{tabular}
    }
    \caption{Grid search with EO and DP constraint on CelebA 
    dataset \citep{liu2015faceattributes} using ResNet-18, best results with respect to test combined loss  among sweeps of hyper-parameters are shown.%
    }
    \label{tab:celeba:dp}
    \vspace{-5pt}
\end{table*}

\subsection{Grid search on AdultIncome (DP and EO)}

\begin{table*}[h]
    \scriptsize\centering
    \resizebox{\textwidth}{!}{%
\begin{tabular}{llrrrrrrrr}
\toprule
                      \multirow{3}{*}{$\epsilon$} &     \multirow{3}{*}{\bf Method}    & \multicolumn{4}{c}{\bf EO} & \multicolumn{4}{c}{\bf DP} \\
                      &         & \multicolumn{2}{c}{\bf Combined Loss} & \multicolumn{2}{c}{\bf Fairness Violation} & \multicolumn{2}{c}{\bf Combined Loss} & \multicolumn{2}{c}{\bf Fairness Violation} \\
                      &         &         Train &      Test &              Train &      Test &         Train &      Test &              Train &     Test \\
\midrule
\multirow{2}{*}{0.01} & FIFA &     14.77618\% & 14.93573\% &           8.53851\% &  8.50086\% &     13.75700\% & 14.05881\% &           0.08609\% & 0.00999\% \\
                      & Vanilla &     16.68724\% & 17.28659\% &          10.39000\% & 10.92794\% &     14.83347\% & 15.09909\% &           3.32436\% & 3.65903\% \\
\midrule
\multirow{2}{*}{0.05} & FIFA &     14.79263\% & 14.91599\% &           8.57436\% &  8.50841\% &     13.74952\% & 14.03440\% &           0.11811\% & 0.01008\% \\
                      & Vanilla &     16.68724\% & 17.28659\% &          10.39000\% & 10.92794\% &     14.83475\% & 15.09909\% &           3.32895\% & 3.65903\% \\
\midrule
\multirow{2}{*}{0.10} & FIFA &     14.70959\% & 14.88935\% &           8.17597\% &  8.16283\% &     13.72278\% & 14.03193\% &           0.11331\% & 0.01011\% \\
                      & Vanilla &     16.68724\% & 17.28659\% &          10.39000\% & 10.92794\% &     14.82782\% & 15.09909\% &           3.31507\% & 3.65903\% \\
\bottomrule
\end{tabular}
    }
\caption{Grid search with EO and DP constraint on AdultIncome 
    dataset using logistic regression, best results with respect to test combined loss  among sweeps of hyper-parameters are shown.%
    }
    \label{tab:income:gs}
\end{table*}

We also give in Table~\ref{tab:income:gs} the grid search results on AdultIncome dataset,
for both EO and DP constraints.
We observe that on this small dataset and small model,
FIFA can also improve generalization performances
for both EO and DP constraints. This further exhibits
the flexibility of the FIFA approach.

\end{document}